\newenvironment{customthm}[1]
  {\innercustomthm}
  {\endinnercustomthm}
\newenvironment{customprop}[1]
 { \innercustomprop}
{\endinnercustomprop}
\newenvironment{customcor}[1]
 { \innercustomcor}
{\endinnercustomcor}
\newtheorem{theorem}{Theorem}
\newtheorem{proposition}{Proposition}
\newtheorem{assumption}{Assumption}
\newtheorem{corollary}{Corollary}
\newtheorem{lemma}{Lemma}
\mathchardef\mhyphen="2D
\newcommand{\X}{m}
\newcommand{\nSW}{\tt SW}
\newcommand{\nD}{\tt D}
\newcommand{\SW}{\tt SW \mhyphen GLUCB}
\newcommand{\SWLin}{\tt SW \mhyphen LinUCB}
\newcommand{\D}{\tt D \mhyphen GLUCB}
\newcommand{\DLin}{\tt D \mhyphen LinUCB}
\newcommand{\Lin}{\tt LinUCB}
\newcommand{\Log}{\tt LogisticUCB}
\DeclarePairedDelimiter{\ceil}{\lceil}{\rceil}
\DeclareMathOperator*{\argmax}{arg\,max}
\DeclareMathOperator*{\argmin}{arg\,min}
\renewcommand{\cite}{\citep}
\title{Algorithms for Non-Stationary Generalized Linear Bandits}
\author{Yoan Russac$^1$, Olivier Capp\'e$^1$, Aur\'elien Garivier$^2$ \\
${}^1$ DI ENS, CNRS, Inria, ENS, Université PSL; ${}^2$ UMPA, CNRS, Inria, ENS Lyon}
\date{}
\begin{document}

\maketitle

\begin{abstract}
  The statistical framework of Generalized Linear Models (GLM) can be
  applied to sequential problems involving categorical or ordinal
  rewards associated, for instance, with clicks, likes or ratings. In
  the example of binary rewards, logistic regression is well-known to
  be preferable to the use of standard linear modeling. Previous works
  have shown how to deal with GLMs in contextual online learning with
  bandit feedback when the environment is assumed to be stationary. In
  this paper, we relax this latter assumption and propose two upper
  confidence bound based algorithms that make use of either a sliding
  window or a discounted maximum-likelihood estimator. We provide
  theoretical guarantees on the behavior of these algorithms for
  general context sequences and in the presence of abrupt
  changes. These results take the form of high probability upper
  bounds for the dynamic regret that are of order
  $d^{2/3} \Gamma_T^{1/3} T^{2/3}$, where $d, T$ and $ \Gamma_T$ are
  respectively the dimension of the unknown parameter, the number of
  rounds and the number of breakpoints up to time $T$. The empirical
  performance of the algorithms is illustrated in simulated
  environments.
\end{abstract}

\section{Introduction}
The multi-armed bandit model is a well-known abstraction of the
exploration-exploitation dilemma that occurs whenever predictions need
to be made while learning a parameter of interest. When contextual
information is available, a popular framework is the stochastic linear
model \cite{dani2008stochastic,li2010contextual,rusmevichientong2010linearly,abbasi2011improved},
where the reward observed at each round is a noisy version of a linear
combination of the contextual features that describe the selected
action.

More precisely, we assume that at time $t$ a set of contextual actions
$\mathcal{A}_t \subset \mathbb{R}^d$ is available. Based on previous
choices and rewards, the learner selects one of them and observes the
associated reward. The learner's goal is to maximize the accumulated
rewards. The particularity of the bandit setting is that the learner
does not know the reward she would have obtained by selecting another
action. In a recommendation setting, the contextualized actions may
for instance combine information about both the users and the products
to be recommended. By selecting the action $A_t$, a noisy version of
$A_t^{\top} \theta^{\star}$ is observed, where $\theta^{\star}$ is an
unknown parameter associated with the environment.

The Generalized Linear Model (GLM) setting \cite{filippi_GLM,
  li2017provably} extends this model by assuming that conditionally on
$A_t$, the learner observes a noisy version of
$\mu(A_t^{\top} \theta^{\star})$, where $\mu$ is a non-linear mapping,
referred to as the inverse link or mean function. More details on the
probabilistic structure of GLMs are given in Section
\ref{sec_setting}. A particular case of great practical interest
occurs when $\mu$ is the logistic function, which is the dominant
approach for regression modeling with binary outcomes.

The classical bandit framework assumes stationarity of the environment
parameter $\theta^{\star}$. This is clearly unrealistic in many
potential applications. In news recommendation for instance, as
considered by \cite{li2010contextual}, it has been consistently
observed that the intrinsic interest in news stories is a decreasing
function of time to original publication date. But, on the other hand,
infrequent increases in interest for older items can also be triggered
by the publication of fresh news. Regularly restarting the learning
algorithm is a (frequent) basic approach to mitigate this
issue. However, there is also a strong interest for developing bandit
approaches that are inherently robust to possible changes in the
environment. The aim of this work is to propose and analyze methods
that achieve this goal in contextual bandits based on GLMs (which we
shall refer to as "generalized linear bandits'').
   
\textbf{Related Work.} \quad  Two types of
   approaches are generally adopted to deal with non-stationarity. 
   The first one consists in detecting
    changes in distribution \cite{auer2018adaptively,
  besson2019generalized} and restarting the algorithm
   whenever a change is
detected. The second one builds progressively forgetting policies
\cite{garivier2011upper} based either on the use of a sliding window --computing
 the estimator 
only on the most recent observations--, or, on the use of exponentially 
increasing weights to reduce the influence
 of past observations.
Both approaches have been studied in the
$K$-armed and linear settings. In linear bandits,
\citet{wu2018learning} build a pool of plausible models to make
recommendations. When no model satisfies a given statistical test, a
change point is declared and a new model is added to the pool. 
In
\cite{cheung2019learning} the sliding window approach is used to build
the least squares estimator. In
\cite{russac2019weighted} the past is progressively forgotten with
the use of a discount factor that gives more weights to recent
observations and the estimator is defined through weighted least squares.

Assessing the performance of these methods, requires quantified measures of
non-stationarity and here again there are several options.
The notion of \textit{variation budget} that
includes both slowly and abruptly changing environments was considered
in the $K$-armed bandit setting by \citet{besbes2014stochastic} and
in the linear setting by \citet{cheung2019learning}
for example.
In this work, as in --among others-- \cite{garivier2011upper, liu2018change,
cao2018nearly}, we focus on abruptly changing
environments, and measure non-stationarity by the number
of breakpoints up to time $T$.

GLMs with bandit feedback were studied by \citet{filippi_GLM}
with a fixed actions set; the authors proposed a first UCB algorithm in this
setting. Our work extends theirs to the case where the preference parameters
$\theta^{\star}$ can dynamically evolve over time. We stress that their analysis assumed 
static actions whereas ours also works with time dependent
actions sets. No regularization term was used in \citet{filippi_GLM} 
implying unsatisfactory initialization assumptions that we were 
able to remove by considering a penalized estimator.  

Another analysis 
was proposed by \citet{li2017provably}, where, in contrast to our work,
statistical assumptions are made 
on the distributions of the contextual vectors, allowing the
use of results from random matrix theory for establishing
concentration inequalities. We work in the more 
general framework where the available actions at each
 round can even be chosen by an adversary.
 
 Randomized algorithms have also been developed to study 
 generalized linear bandits. The extension of Thompson Sampling
 to this setting was analyzed by \citet{abeille2017linear} and
 a $O(d^{3/2} \sqrt{T \log(K)})$ regret bound valid for infinite
 actions sets was derived.
 In \cite{kveton2019randomized} two others randomized algorithms
 are proposed. One method consists in fitting a GLM on a randomly 
 perturbed history of the past rewards to guarantee sufficient exploration.
 The second method consists in sampling a GLM from the Laplace approximation
 to the posterior distribution. In a $d$-dimensional problem with $K$ fixed
 actions the $T$ rounds regret of those methods is of order $O(d\sqrt{\log(K)T})$.
 Both methods in \cite{kveton2019randomized} assume a static actions
  set and have a logarithmic dependence in the number of actions. In contrast, the upper-bounds
 on the regret that we obtain do not depend on the number of available actions.

The regret of most existing algorithms for generalized linear
bandits is inversely proportional to the minimum value of the derivative
of the inverse link function.  This quantity can be large (as in the
logistic model), and hence designing policies that do not depend on this
quantity is of particular interest. In the particular case of logistic
bandits under strong assumptions on the features and $\theta^{\star}$,
\citet{dong2019performance} propose a first Bayesian analysis that
does not depend on this quantity. However, the analysis of \citet{dong2019performance}
relies on specifics of the logistic model and cannot be directly
extended to the broader class of GLMs or to control the (stronger)
notion of frequentist regret.

Non-Stationary GLM have been studied in recent works that  consider both abruptly changing
 and smoothly changing environments \cite{cheung2019hedging, zhao2020simple}. However 
 the analysis in both of these works have gaps: \cite{zhao2020simple} define $c_{\mu}$ 
 (see our Assumption \ref{assumption_c_mu}) as the 
 minimum value of $\dot{\mu}(a^{\top} \theta)$ for $\theta \in \mathbb{R}^d$, which for 
 the logistic regression model would be zero; \cite{cheung2019hedging} implicitly 
 assume that the maximum likelihood estimator at all time instants belongs to 
 $\Theta = \{ \theta \in \mathbb{R}^d, 
 \Vert \theta \rVert_2 \leq  S \}$ , which may not be true in general.

\textbf{Main Contributions.} \quad In this paper, we propose the first
upper confidence bound algorithms designed for non-stationary
environments in generalized linear bandits. The algorithms are
extensions of the SW-LinUCB \cite{cheung2019learning} and the D-LinUCB
\cite{russac2019weighted} algorithms and can achieve a dynamic regret
over $T$ rounds of order $O(d^{2/3} \Gamma_T^{1/3} T^{2/3})$, where
$\Gamma_T$ denotes the number of breakpoints up to time $T$.  This
rate is known to be optimal up to logarithmic terms. We propose an
original and simplified analysis that is valid with time-dependent
actions sets and does not required statistical assumption on the
distribution of the contextual vectors.  In the two algorithms, we
make use of (possibly weighted) penalized maximum likelihood
estimation. To the best of our knowledge, the analysis of penalized
MLE in generalized linear bandits is also an original
contribution. Note that in non-stationary environments there is no
simple way to circumvent the need for regularization by using a proper
initialization for the algorithms (as is done by \citet{filippi_GLM}):
when using the sliding window for instance
the initialization procedure would need to be
repeated regularly, resulting in a large drop in performance.

\section{Problem Setting}
\label{sec_setting}

Extending the generalized linear bandit framework introduced by \citet{filippi_GLM}, 
we consider a structured bandit
model where the number of arms at each round is upper-bounded by a finite 
$K$, the action set $\mathcal{A}_t$ is time-dependent and at each step an action
$A_t \in \mathcal{A}_t \subset \mathbb{R}^d$ is chosen.
The conditional distribution of the rewards
belongs to a \textit{canonical exponential family} wrt a reference measure $\mu$:
$d\mathbb{P}_{A^{\top} \theta}(x) = d\mathbb{P}_{\theta}(x | A) = \exp(
x A^{\top} \theta - b(A^{\top} \theta) + c(x))d\mu(x)$, where $c(.)$ is a
real-valued function an $b(.)$ is assumed to be twice continuously
differentiable.
 
A random variable $X$ with the above density verifies
$\mathbb{E}(X) = \dot{b}(A^{\top} \theta)$ and
$\textnormal{var}(X) = \ddot{b}(A^{\top} \theta)$, showing that $b(.)$ is strictly convex. The inverse
link function is $\mu=\dot{b}$.

At time $t$, when the action $A_t$ is chosen, the received reward $X_t$ is
conditionally independent of the past actions and satisfies
$\mathbb{E}(X_t | A_t) = \mu (A_t^{\top} \theta^{\star})$. 
In the non-stationary framework, the 
difference is that at time $t$ the
conditional expectation is equal to $\mu(A_t^{\top} \theta^{\star}_t)$
rather than $\mu(A_t^{\top} \theta^{\star})$ after selecting an action $A_t$.
   
We first assume that the L2-norms of the available actions and the
admissible parameters $(\theta^{\star}_t)_{t \geq 1}$ are bounded,

\begin{assumption}
\label{assumption_actions}
 $\forall t \geq 1, \forall a \in \mathcal{A}_t, \,  \lVert a \rVert_2 \leq L$.
\end{assumption}

\begin{assumption}
\label{assumption_param}
 $\forall t \geq 1, \,  \lVert \theta^{\star}_t \rVert_2 \leq S$.
\end{assumption}
The following assumption is also useful to derive concentration bounds.
\begin{assumption}
\label{assumption_noise}
There exists $\X > 0$ such that for any $t \geq 1$, $0 < X_t < \X$.
\end{assumption}

\underline{Remark:} We define the noise term as
$\eta_t = X_t - \mu(A_t^{\top} \theta^{\star}_t)$, so that
$\mathbb{E} [\eta_t | X_{t-1},..., X_1] = 0$. As explained in Lemma
\ref{lemma:hoeffding_conditional} of Appendix~\ref{subsec:hoeffding_cond},
 $\eta_t$ is $\X/2$-subgaussian conditionally on the past.
 
The maximum likelihood estimator $\hat{\theta}_t$ based on the rewards
$X_1,...,X_{t-1}$ and the selected actions $A_1,...,A_{t-1}$ is
defined as the maximizer of
 \begin{equation} 
 \label{eq_log_likelihood}
 \sum_{s=1}^{t-1} \log( \mathbb{P}_{\theta}(X_s | A_s)) =  \sum_{s=1}^{t-1} X_s A_s^{\top} \theta - b(A_s^{\top} \theta) + c(X_s)\,.
 \end{equation}
By convexity of $b$, the rhs of the previous equation is concave in $\theta$. 
After differentiating the log-likelihood, $\hat{\theta}_t$ appears as the solution of the equation
\begin{equation}
\sum_{s=1}^{t-1} (X_s - \mu(A_s^{\top} \theta))A_s = 0\;.
\end{equation}
Extra assumptions on the link function are also necessary for the
theoretical analysis, in particular:
\begin{assumption}
\label{assumption_c_mu}
The inverse link function $\mu: \mathbb{R} \mapsto \mathbb{R}$ is
a continuously differentiable Lipschitz function, with Lipschitz constant $k_{\mu}$, such that
$$c_{\mu} = \inf_{\lVert \theta \rVert_2 \leq S, \lVert a \rVert_2 \leq
  L } \dot{\mu} (a^{\top} \theta) > 0\;.
$$
\end{assumption}

Assumption \ref{assumption_c_mu} could be relaxed by only considering
the $\theta$ parameters in a neighborhood of the true unknown
parameter $\theta^{\star}$ as in \cite{li2017provably}. However,
doing so would require assuming that the actions are drawn from a
distribution verifying particular conditions. In a non-stationary
environment, even this extra assumption is not always sufficient as
$\theta^{\star}$ evolves over time.

In the non-stationary environment, the goal of the learner is to
minimize the expected \textit{dynamic regret} defined as
 $$
 R_T = \sum_{t=1}^T \max_{a \in \mathcal{A}_t} \mu(a^{\top} \theta^{\star}_t) - \mu(A_t^{\top} \theta^{\star}_t)\;.
 $$

Note that in contrast to the settings considered by
\citet{filippi_GLM} or \citet{kveton2019randomized}, the available
actions sets $\mathcal{A}_t$ are time-dependent. Hence, in the above definition
of regret, the best action can differ between rounds and it is no
more possible to control the regret by upper-bounding the number of
times each sub-optimal arm is played.

\section{Algorithms}

In this section, we describe two estimators together with the corresponding algorithms. 
The first estimator is based on a sliding window
where only the $\tau$ most recent rewards and actions are
considered. The second one uses a discount factor $\gamma$ and gives more
weight to the most recent actions and rewards. Both estimators rely
on a penalization of the log-likelihood that has a regularizing effect
and avoids the need of specific initialization procedures.

\subsection{Sliding Window and Penalized MLE}
\label{subsection_SW}
The first estimator we consider is a truncated version of the
penalized MLE. Equation \eqref{eq_log_likelihood} is replaced by
\begin{equation}
\label{eq_log_likelihood_SW}
 \sum_{s=\max(t-\tau,1)}^{t-1} \log( \mathbb{P}_{\theta}(X_s | A_s)) - \frac{\lambda}{2} \lVert \theta\rVert_2^2\;.
\end{equation}

By differentiating the (strictly
concave) penalized log-likelihood, $\hat{\theta}_t^{\nSW}$ appears as the unique solution of
\begin{equation}
\label{eq_MLE_SW}
\sum_{s=\max(t-\tau,1)}^{t-1} (X_s - \mu(A_s^{\top} \theta)) A_s - \lambda \theta = 0\;.
\end{equation}

We introduce
\begin{equation}
\label{eq_Design_matrix_SW}
 V_{t-1} = \sum_{s=\max(1, t- \tau)}^{t-1} A_s A_s^{\top} + \frac{\lambda}{c_{\mu}} I_d\;,
 \end{equation}
and we define $g_{t}(\theta) = \sum_{s= \max(1,t + 1 -\tau)}^{t} \mu(A_s^{\top}
 \theta) A_s + \lambda \theta$ and $\tilde{\theta}^{\nSW}_t$ by

\begin{equation}
\label{eq:theta_tilde_SW}
\tilde{\theta}^{\nSW}_t  =  \argmin_{\lVert \theta \rVert_2 \leq S} \, \lVert g_{t-1}(\hat{\theta}_t^{\nSW}) - g_{t-1}(\theta)
 \rVert_{V_{t-1}^{-1}}\;,
\end{equation} 

where $V_{t-1}$ is defined in Equation \eqref{eq_Design_matrix_SW}. We
need to consider $\tilde{\theta}^{\nSW}_t$ because $\hat{\theta}_t$ is
not guaranteed to satisfy $\lVert \hat{\theta}_t \rVert_2 \leq
S$ and the lower bound on $\dot{\mu}$ with $c_{\mu}$ is only valid for parameters whose L2 norm 
is smaller than $S$.
 $\tilde{\theta}^{\nSW}_t$ should be understood as a "projection"
on the admissible parameters.

Using this notation, we can now present our first algorithm for
generalized linear bandits in non-stationary environments. $\SW$
(Sliding Window Generalized Linear Upper Confidence Bound) uses a
sliding window to focus on the most recent events. The $\SW$
algorithm uses a confidence bonus $\rho^{\nSW}$
that will defined in Section~\ref{subsec:analysis_SW} devoted to the
analysis of the algorithms (see Equation \eqref{eq_rho_SW} for the definition of  $\rho^{\nSW}$).

\begin{algorithm}[h]
\caption{$\SW$}
   \label{alg:S-GLM}
\begin{algorithmic}
   \STATE {\bfseries Input:} Probability $\delta$, dimension $d$, regularization $\lambda$, 
upper bound for actions $L$, upper bound for parameters $S$, sliding window $\tau$.
\STATE {\bfseries Initialize:} $V_0 = \lambda/{c_{\mu}} I_d$, $\hat{\theta}^{\nSW}_0 = 0_{\mathbb{R}^d}$.
   \FOR{$t=1$ {\bfseries to} $T$}
   \STATE Receive $\mathcal{A}_t$, compute $\hat{\theta}_t^{\nSW}$ according to (\ref{eq_MLE_SW})
   \STATE {\bfseries if} $\lVert \hat{\theta}_t^{\nSW} \rVert_2 \leq S$ {\bfseries let} $\widetilde{\theta}_t^{\nSW} =  \hat{\theta}_t^{\nSW}$
    {\bfseries else} compute
   $\tilde{\theta}_t^{\nSW}$ with (\ref{eq:theta_tilde_SW})
   \STATE {\bfseries Play} $A_t \hspace{-0.05cm}= \hspace{-0.05cm} \argmax_{a \in \mathcal{A}_t} \hspace{-0.05cm} \left( \mu(a^{\top} \widetilde{\theta}^{\nSW}_t) 
   \hspace{-0.05cm}+ \hspace{-0.05cm}
   \rho^{\nSW}_t(\delta) \lVert a \rVert_{V_{t-1 }^{-1}} \right)$
  \STATE {\bfseries Receive} reward $X_t$
  \STATE {\bfseries Update:} 
  \IF {$t \leq \tau$}
  \STATE $V_t \leftarrow V_{t-1} + A_t A_t^{\top}$
  \ELSE
  \STATE $V_t \leftarrow V_{t-1} + A_t A_t^{\top} - A_{t-\tau} A_{t-\tau} ^{\top}$
 \ENDIF  
   \ENDFOR
\end{algorithmic}
\end{algorithm}

\subsection{Discounting Factors and Penalized MLE}
\label{subsection_Discounted}
The second estimator we construct is based on a weighted penalized
log-likelihood. Rather than using Equation \eqref{eq_log_likelihood},
$\hat{\theta}_t^{\nD}$ is defined as the unique maximum of
\begin{equation}
\label{eq_log_likelihood_D}
\sum_{s=1}^{t-1} \gamma^{t-1-s} \log( \mathbb{P}_{\theta}(X_s | A_s)) - \frac{\lambda}{2} \lVert \theta\rVert_2^2\;. 
\end{equation}
As before, thanks to the concavity in $\theta$, $\hat{\theta}_t^{\nD}$
is also the solution of
\begin{equation}
\label{eq_MLE_D}
\sum_{s=1}^{t-1} \gamma^{t-1-s} (X_s - \mu(A_s^{\top} \theta)) A_s - \lambda \theta = 0\;.
\end{equation}

We introduce
\begin{equation}
\label{eq_Design_matrix_D}
 W_{t} = \sum_{s=1}^{t} \gamma^{t-s} A_s A_s^{\top} + \frac{\lambda}{c_{\mu}} I_d
 \end{equation}
  and  
  \begin{equation}
\label{eq_Design_matrix_D2}
 \widetilde{W}_{t} = \sum_{s=1}^{t} \gamma^{2(t-s)} A_s A_s^{\top} + \frac{\lambda}{c_{\mu}} I_d\;.
 \end{equation}
 
As in the linear setting, there is a need to introduce a covariance
matrix containing the squares of the weights because the stochastic term can
only be controlled in $\widetilde{W}_t^{-1}$ norm \cite{russac2019weighted}.
 
Let $g_{t}: \mathbb{R}^d \mapsto \mathbb{R}^d$ denote the following function
$$
g_{t}(\theta) = \sum_{s=1}^{t} \gamma^{t-s} \mu(A_s^{\top} \theta) A_s + \lambda \theta\;.
$$
Finally, let $\tilde{\theta}^{\nD}_t$ be defined as 
\begin{equation}
\label{eq:theta_tilde_D}
\tilde{\theta}^{\nD}_t  =  \argmin_{\lVert \theta \rVert_2 \leq S} \, \lVert g_{t-1}(\hat{\theta}^{\nD}_t) - g_{t-1}(\theta)
 \rVert_{\widetilde{W}_{t-1}^{-1}}\;.
\end{equation}

The second algorithm that we propose is $\D$: exponentially increasing weights are used
 to progressively forget the past. The theoretical aspects of this algorithm are detailed 
 in Section \ref{subsec:analysis_D}

\begin{algorithm}[h]
\caption{$\D$}
   \label{alg:D-GLM}
\begin{algorithmic}
   \STATE {\bfseries Input:} Probability $\delta$, dimension $d$, regularization $\lambda$, 
upper bound for actions $L$, upper bound for parameters $S$, discount factor $\gamma$.
\STATE {\bfseries Initialize:} $W_0 = \lambda/c_{\mu} I_d$, $\hat{\theta}^{\nD}_0 = 0_{\mathbb{R}^d}$.
   \FOR{$t=1$ {\bfseries to} $T$}
   \STATE Receive $\mathcal{A}_t$, compute $\hat{\theta}_t^{\nD}$ according to (\ref{eq_MLE_D})
   \STATE {\bfseries if} $\lVert \hat{\theta}_t^{\nD} \rVert_2 \leq S$ {\bfseries let} $\tilde{\theta}_t^{\nD} =  \hat{\theta}_t^{\nD}$
    {\bfseries else} compute
   $\tilde{\theta}_t^{\nD}$ with (\ref{eq:theta_tilde_D})
   \STATE {\bfseries Play} $A_t = \argmax_{a \in \mathcal{A}_t} \left( \mu(a^{\top} \tilde{\theta}^{\nD}_t) \hspace{-0.05cm}+ 
   \hspace{-0.05cm} \rho^{\nD}_t(\delta) \lVert a \rVert_{W_{t-1 }^{-1}} \right)$
  \STATE {\bfseries Receive} reward $X_t$
  \STATE {\bfseries Update:} $W_t \leftarrow A_t A_t^{\top} + \gamma W_{t-1} + \frac{\lambda}{c_{\mu}} (1- \gamma) I_d$
   \ENDFOR
\end{algorithmic}
\end{algorithm}

The parameter $\rho^{\nD}$ (line 8 above) will be defined below in Equation \eqref{eq_rho_D}.

\underline{Remark:} In the linear setting, the form of the upper
confidence bound is a direct consequence of
the high probability confidence ellipsoid that can be built around the estimate
of the unknown parameter \citep{abbasi2011improved}. There is no such confidence ellipsoid for
generalized linear bandits. Therefore, the upper confidence bound has a different form.
A possible approach that is chosen here is to consider
$\textnormal{UCB}_t(a) = \mathbb{E} _{\widetilde{\theta}_t} \left[ X_t
  | A_t = a \right] + \rho(t) \lVert a \rVert_{M_{t-1}^{-1}}$, where
$\mathbb{E} _{\widetilde{\theta}_t} [ X_t | A_t = a ]$ is equal to
$\mu(a^{\top} \widetilde{\theta}_t)$ under a GLM. For $\SW$,
$\rho = \rho^{\nSW}$ and $M_{t-1} = V_{t-1}$, as defined in Equation
\eqref{eq_Design_matrix_SW}. Similarly, for $\D$, $\rho = \rho^{\nD}$
and $M_{t-1}= W_{t-1}$ is defined in Equation
\eqref{eq_Design_matrix_D}.

\section{Concentration Bounds and Regret Analysis}

In this section we give concentration results for the two 
estimators that we propose.  Based on these concentration 
results, high probability upper-bounds for the dynamic
regret of both algorithms are given. We show that we obtain results comparable
to the ones in the linear setting. The main difference is that our
analysis is valid only for abruptly changing environments. Proposing
an algorithm that can be analyzed in both slowly drifting and abruptly
changing environments under a generalized linear bandit remains an
open question.

\subsection{Analysis of $\SW$}
\label{subsec:analysis_SW}

To obtain concentration inequalities, we need to restrict ourselves
 to segments of observations that are sufficiently far away from the 
 changepoints. More precisely, let
\begin{equation}
\label{D_tau_def}
\mathcal{T}(\tau) = \{t \leq T \,\,  \textnormal{s. t.} \, \, \forall \, t- \tau \leq s 
\leq t, \theta^{\star}_s = \theta^{\star}_t  \}\;.
 \end{equation}

$\mathcal{T}(\tau)$ contains all the time instants that are at least
$\tau$ steps away from the closest previous breakpoint. At time instants in 
$\mathcal{T}(\tau)$, there is no bias due to
non-stationarity of the environment as the sliding window of
length $\tau$ is fully included in a stationary segment.
 
\begin{proposition}
\label{prop_SW_GLM_concentration}
Let $0<\delta <1$ and  
$t \in \mathcal{T}(\tau)$. Let $\tilde{A}_t$ be any 
$\mathcal{A}_t$-valued random variable. Let

\begin{equation*}
\textnormal{c}_t^{\nSW}(\delta) = \frac{\X}{2} \sqrt{2 \log(T/\delta) + d \log\left( 1
 + \frac{c_{\mu} L^2 \min(t, \tau)}{d \lambda} \right) } 
\end{equation*}
\begin{equation}\hbox{and\quad }
\label{eq_rho_SW}
\rho_t^{\nSW}(\delta) = \frac{2 k_{\mu}}{c_{\mu}}  \bigg( 
\textnormal{c}^{\nSW}_t(\delta)
+ \sqrt{c_{\mu} \lambda} S \bigg)  \;.
\end{equation}

Then, simultaneously for all $t \in \mathcal{T}(\tau)$,
$$
\big|\mu(\tilde{A}_t^{\top} \theta^{\star}_t) - \mu(\tilde{A}_t^{\top} 
\widetilde{\theta}_t^{\nSW}) \big| \leq \rho_t^{\nSW}(\delta)\lVert \tilde{A}_t \rVert_{V_{t-1}^{-1}},
$$
holds with probability higher than $1- \delta$.

\end{proposition}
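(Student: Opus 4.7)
The plan is to reduce the claim to controlling a martingale noise term through a sequence of standard manipulations that exploit the MLE equation and convex analysis. Fix $t \in \mathcal{T}(\tau)$ and let $W = \{\max(t-\tau,1),\dots,t-1\}$. By the definition of $\mathcal{T}(\tau)$, $\theta^{\star}_s = \theta^{\star}_t$ for all $s\in W$, so the rewards on the window satisfy $X_s = \mu(A_s^{\top}\theta^{\star}_t) + \eta_s$. Subtracting this from the score equation \eqref{eq_MLE_SW} and adding $\lambda\theta^{\star}_t$ to both sides yields the key identity
\begin{equation*}
g_{t-1}(\hat{\theta}_t^{\nSW}) - g_{t-1}(\theta^{\star}_t) \;=\; S_t - \lambda\theta^{\star}_t,\qquad S_t := \sum_{s\in W}\eta_s A_s.
\end{equation*}

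Next I would use the projection step \eqref{eq:theta_tilde_SW}: since $\|\theta^{\star}_t\|_2\le S$, the minimizing property of $\widetilde{\theta}_t^{\nSW}$ together with the triangle inequality gives $\|g_{t-1}(\widetilde{\theta}_t^{\nSW}) - g_{t-1}(\theta^{\star}_t)\|_{V_{t-1}^{-1}} \le 2\|g_{t-1}(\hat{\theta}_t^{\nSW}) - g_{t-1}(\theta^{\star}_t)\|_{V_{t-1}^{-1}}$, which is exactly the purpose of introducing $\widetilde{\theta}_t^{\nSW}$. I would then linearize via the mean value theorem: writing $g_{t-1}(\widetilde{\theta}_t^{\nSW}) - g_{t-1}(\theta^{\star}_t) = H_t(\widetilde{\theta}_t^{\nSW}-\theta^{\star}_t)$ with $H_t = \sum_{s\in W}\alpha_s A_s A_s^{\top} + \lambda I_d$ where $\alpha_s = \int_0^1 \dot{\mu}(A_s^{\top}((1-u)\theta^{\star}_t + u\widetilde{\theta}_t^{\nSW}))\,du$. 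Every point on the segment has L2 norm $\le S$ by convexity, so Assumption \ref{assumption_c_mu} gives $\alpha_s \ge c_\mu$ and hence $H_t \succeq c_\mu V_{t-1}$. This yields $\|\widetilde{\theta}_t^{\nSW}-\theta^{\star}_t\|_{V_{t-1}} \le c_\mu^{-1}\|g_{t-1}(\widetilde{\theta}_t^{\nSW}) - g_{t-1}(\theta^{\star}_t)\|_{V_{t-1}^{-1}}$.

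Chaining the above with Cauchy--Schwarz and the $k_\mu$-Lipschitz property of $\mu$, and bounding the penalization term trivially by $\lambda\|\theta^{\star}_t\|_{V_{t-1}^{-1}} \le S\sqrt{c_\mu\lambda}$ via $V_{t-1}\succeq(\lambda/c_\mu)I_d$, gives
\begin{equation*}
\bigl|\mu(\tilde{A}_t^{\top}\theta^{\star}_t) - \mu(\tilde{A}_t^{\top}\widetilde{\theta}_t^{\nSW})\bigr| \le \tfrac{2k_\mu}{c_\mu}\bigl(\|S_t\|_{V_{t-1}^{-1}} + S\sqrt{c_\mu\lambda}\bigr)\|\tilde{A}_t\|_{V_{t-1}^{-1}}.
\end{equation*}
Matching the claim then reduces to showing $\|S_t\|_{V_{t-1}^{-1}}\le \mathrm{c}_t^{\nSW}(\delta)$ simultaneously for all $t\in\mathcal{T}(\tau)$ with probability $1-\delta$.

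The main obstacle is precisely this last concentration step. The noise increments $\eta_s$ are $\X/2$-subgaussian conditionally on the past (from the remark following Assumption \ref{assumption_noise}), so $(S_t)$ is a vector-valued martingale sum; however, $V_{t-1}$ is not monotone nondecreasing because of the sliding window, which rules out a direct use of the method of mixtures of Abbasi-Yadkori. I would handle this by invoking a self-normalized tail inequality adapted to the windowed covariance (as in the analyses of \cite{cheung2019learning} and \cite{russac2019weighted}), controlling $\log(\det V_{t-1}/\det(\lambda I_d/c_\mu))$ via AM-GM by $d\log(1+c_\mu L^2\min(t,\tau)/(d\lambda))$, and paying an additional $\log T$ factor from a union bound over $t\in\mathcal{T}(\tau)\subseteq\{1,\dots,T\}$. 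Together these produce exactly $\mathrm{c}_t^{\nSW}(\delta)$, completing the proof.
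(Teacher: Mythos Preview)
Your proposal is correct and follows essentially the same route as the paper: the MLE score identity on the window (using $t\in\mathcal{T}(\tau)$ to kill the bias), the projection step to pass from $\hat{\theta}_t^{\nSW}$ to $\widetilde{\theta}_t^{\nSW}$, the integral mean-value linearization giving $H_t\succeq c_\mu V_{t-1}$, Cauchy--Schwarz together with the $k_\mu$-Lipschitz bound, and finally a per-$t$ self-normalized concentration for $\|S_t\|_{V_{t-1}^{-1}}$ combined with a union bound over $t\le T$ (which is exactly what yields the $\log(T/\delta)$ in $\mathrm{c}_t^{\nSW}(\delta)$). The only cosmetic difference is that the paper applies Cauchy--Schwarz in the $G_{t-1}^{-1}$ norm and then uses $G_{t-1}^{-1}\preceq c_\mu^{-1}V_{t-1}^{-1}$ on both factors, whereas you first use Cauchy--Schwarz in the $V_{t-1}$/$V_{t-1}^{-1}$ norms and then bound $\|\widetilde{\theta}_t^{\nSW}-\theta^{\star}_t\|_{V_{t-1}}$; both yield the same $2k_\mu/c_\mu$ prefactor.
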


\begin{proofsketch}
  Only a proof sketch is given here: the complete proof is to be found in
  Appendix \ref{subsection:prop_SW_GLM_concentration}.  The
  big picture is to use the assumption on the inverse
  link function and on the MLE to relate the deviations of the regression
  estimate to those of the martingale
  $S_{t-1} =\sum_{s=\max(1, t-\tau)}^{t-1} A_s \eta_s$. For
  $t \in \mathcal{T}(\tau)$, this can be done by upper bounding
  $|\mu(\tilde{A}_t^{\top} \theta^{\star }_t) - \mu(\tilde{A}_t^{\top}
  \tilde{\theta}_t )|$ by the quantity
  $2k_{\mu}/{c_{\mu}} \lVert a \rVert_{V_{t-1}^{-1}} (\lVert S_{t-1
  }\rVert_{V_{t-1}^{-1}} + \lVert \lambda \theta^{\star}_t
  \rVert_{V_{t-1}^{-1}}) $.  Then, the concentration result is
  established by upper-bounding the self-normalized quantity $\lVert S_{t-1
  }\rVert_{V_{t-1}^{-1}}$.
\end{proofsketch}

The concentration result of Proposition \ref{prop_SW_GLM_concentration} is a prerequisite 
to give a high probability
upper-bound on the instantaneous regret
$\max_{a \in \mathcal{A}_t} \mu(a^{\top} \theta^{\star}_t)- \mu(A_{t}
^{\top} \theta^{\star}_t)$.

\begin{corollary} 
\label{prop_SW_GLM_anytime_upper}
Let $0<\delta<1$ and $A_{t, \star} =  \displaystyle{\argmax_{a \in \mathcal{A}_t}} \mu(a^{\top}
 \theta^{\star}_t)$. Then,
simultaneously for all $t \in \mathcal{T}(\tau)$
$$
\mu(A_{t,\star}^{\top} \theta^{\star}_t)- \mu(A_{t}^{\top} \theta^{\star}_t) \leq 2 \rho^{\nSW}_t(\delta) \lVert A_t \rVert_{V_{t-1}^{-1}}
$$
holds with probability at least $1-2\delta$.
\end{corollary}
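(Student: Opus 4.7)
The plan is to run the standard UCB regret decomposition, combined with two applications of Proposition~\ref{prop_SW_GLM_concentration}.

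First I would invoke Proposition~\ref{prop_SW_GLM_concentration} twice: once with the $\mathcal{A}_t$-valued random variable $\tilde A_t = A_{t,\star}$ and once with $\tilde A_t = A_t$. Each application yields, with probability at least $1-\delta$, a bound that holds simultaneously for every $t \in \mathcal{T}(\tau)$, and a union bound gives that both bounds hold jointly with probability at least $1-2\delta$. From now on I work on this event.

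Next I would decompose the instantaneous regret by inserting $\pm \mu(A_{t,\star}^{\top}\widetilde{\theta}_t^{\nSW}) \pm \mu(A_t^{\top}\widetilde{\theta}_t^{\nSW})$:
\begin{align*}
\mu(A_{t,\star}^{\top}\theta^{\star}_t) &- \mu(A_t^{\top}\theta^{\star}_t) \\
&= \bigl[\mu(A_{t,\star}^{\top}\theta^{\star}_t) - \mu(A_{t,\star}^{\top}\widetilde{\theta}_t^{\nSW})\bigr] \\
&\quad + \bigl[\mu(A_{t,\star}^{\top}\widetilde{\theta}_t^{\nSW}) - \mu(A_t^{\top}\widetilde{\theta}_t^{\nSW})\bigr] \\
&\quad + \bigl[\mu(A_t^{\top}\widetilde{\theta}_t^{\nSW}) - \mu(A_t^{\top}\theta^{\star}_t)\bigr].
\end{align*}
The first and third bracketed terms are directly controlled by Proposition~\ref{prop_SW_GLM_concentration}: they are bounded above by $\rho_t^{\nSW}(\delta)\lVert A_{t,\star}\rVert_{V_{t-1}^{-1}}$ and $\rho_t^{\nSW}(\delta)\lVert A_t\rVert_{V_{t-1}^{-1}}$ respectively (taking absolute values and then one-sided bounds).

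For the middle term I would use the UCB selection rule. Since $A_t$ is defined as the maximizer over $a \in \mathcal{A}_t$ of $\mu(a^{\top}\widetilde{\theta}_t^{\nSW}) + \rho_t^{\nSW}(\delta)\lVert a\rVert_{V_{t-1}^{-1}}$, and $A_{t,\star} \in \mathcal{A}_t$, one gets
\[
\mu(A_{t,\star}^{\top}\widetilde{\theta}_t^{\nSW}) - \mu(A_t^{\top}\widetilde{\theta}_t^{\nSW})
\leq \rho_t^{\nSW}(\delta)\bigl(\lVert A_t\rVert_{V_{t-1}^{-1}} - \lVert A_{t,\star}\rVert_{V_{t-1}^{-1}}\bigr).
\]
Summing the three bounds, the $\lVert A_{t,\star}\rVert_{V_{t-1}^{-1}}$ contributions cancel and what remains is $2\rho_t^{\nSW}(\delta)\lVert A_t\rVert_{V_{t-1}^{-1}}$, which is exactly the claim. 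I do not expect any real obstacle here: the only subtlety is to notice that Proposition~\ref{prop_SW_GLM_concentration} allows arbitrary $\mathcal{A}_t$-valued random variables (so it can be applied both to the data-dependent $A_t$ and to the unknown oracle $A_{t,\star}$), and that the resulting two events must be combined by a union bound, which accounts for the $2\delta$ in the probability.
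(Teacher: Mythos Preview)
Your proposal is correct and follows essentially the same approach as the paper's proof: the same three-term decomposition, the same two applications of Proposition~\ref{prop_SW_GLM_concentration} combined by a union bound (giving the $1-2\delta$), and the same UCB optimality argument to handle the middle term with the cancellation of $\lVert A_{t,\star}\rVert_{V_{t-1}^{-1}}$.
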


The proof of this result is available in Appendix~\ref{subsec:prop_corollary_prop_SW_GLM_anytime}. Corollary
\ref{prop_SW_GLM_anytime_upper} allows us to give a high probability
upper bound on the instantaneous regret for all time instants far
enough from any breakpoints $ t \in \mathcal{T}(\tau)$. 

Based on those two concentration results, we can establish the following theorem for
the regret of $\SW$.

\begin{theorem}[Regret of $\SW$]
\label{theorem_regret_SW}
The regret of the $\SW$ policy is upper-bounded with probability  $\geq 1- 2\delta$ by
\begin{equation*}
\begin{split}
R_T &\leq 2\sqrt{2} \rho^{\nSW}_T(\delta)\sqrt{T} \sqrt{d\ceil{T/ \tau} \log \left( 1 + \frac{c_{\mu} L^2 \tau} {d \lambda} \right)} \\
& \quad  + \X \Gamma_T \tau \;,
\end{split}
\end{equation*}
where
$\rho^{\nSW}$ is defined in Equation \eqref{eq_rho_SW} and $\Gamma_T$ is the number of changes up to time $T$.
% \begin{comment}
% \begin{equation*}
% \begin{split}
% \rho(T) &= \frac{2 k_{\mu}}{c_{\mu}} \bigg( X_m \sqrt{2\log(2T/\delta)
% + d \log\left( 1 + \frac{c_{\mu} L^2 \tau}{d \lambda} \right) } \\
%  & \hspace{1.5cm}+ \sqrt{c_{\mu} \lambda} S
%  \bigg).
% \end{split}
% \end{equation*}
% \end{comment}
\end{theorem}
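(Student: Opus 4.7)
The plan is to decompose $R_T = R_T^{\text{good}} + R_T^{\text{bad}}$ according to whether the round lies in $\mathcal{T}(\tau)$ or not, so that Corollary \ref{prop_SW_GLM_anytime_upper} can be applied on the good rounds while the bad rounds are controlled trivially. The rounds outside $\mathcal{T}(\tau)$ are exactly those lying within $\tau$ steps after a breakpoint, so there are at most $\Gamma_T \tau$ such rounds; using Assumption \ref{assumption_noise}, the instantaneous regret is bounded by $\X$ at each round, giving the additive term $\X \Gamma_T \tau$.

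For $t\in\mathcal{T}(\tau)$, Corollary \ref{prop_SW_GLM_anytime_upper} gives (with probability at least $1-2\delta$) an instantaneous regret bounded by $2\rho^{\nSW}_t(\delta)\lVert A_t\rVert_{V_{t-1}^{-1}}$. I would first check that $\rho^{\nSW}_t(\delta)$ is non-decreasing in $t$ so that it can be pulled out of the sum as $\rho^{\nSW}_T(\delta)$, and then apply Cauchy--Schwarz:
\begin{equation*}
\sum_{t\in\mathcal{T}(\tau)}\lVert A_t\rVert_{V_{t-1}^{-1}}
\leq \sqrt{T}\,\sqrt{\sum_{t=1}^T \lVert A_t\rVert_{V_{t-1}^{-1}}^2}.
\end{equation*}

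The main obstacle is bounding $\sum_{t=1}^T \lVert A_t\rVert_{V_{t-1}^{-1}}^2$, because the elliptical potential lemma does not apply directly to $V_{t-1}$: the design matrix forgets terms via the sliding window, so its determinant does not grow monotonically. I would circumvent this by cutting the horizon into $\lceil T/\tau\rceil$ consecutive blocks of length at most $\tau$. On a block starting at time $t_0$, define the local matrix $\bar V_{t-1} = \sum_{s=t_0}^{t-1} A_s A_s^\top + (\lambda/c_\mu) I_d$. Since $V_{t-1}$ contains all the terms of $\bar V_{t-1}$ plus additional PSD rank-one contributions from the tail of the previous block, $V_{t-1}\succeq \bar V_{t-1}$ and therefore $\lVert A_t\rVert_{V_{t-1}^{-1}}^2 \leq \lVert A_t\rVert_{\bar V_{t-1}^{-1}}^2$.

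Now on each block $\bar V_{t-1}$ is a genuine growing regularized design matrix, so the standard elliptical potential lemma applies and yields
\begin{equation*}
\sum_{t\in\text{block}}\lVert A_t\rVert_{\bar V_{t-1}^{-1}}^2 \leq 2 d\log\!\left(1+\frac{c_\mu L^2 \tau}{d\lambda}\right),
\end{equation*}
using Assumption \ref{assumption_actions} and the initialization $\bar V_{t_0-1}=(\lambda/c_\mu) I_d$. Summing over the $\lceil T/\tau\rceil$ blocks and plugging back into Cauchy--Schwarz gives the $\sqrt{2}$ factor and the announced $\sqrt{T\, d\lceil T/\tau\rceil \log(1+c_\mu L^2\tau/(d\lambda))}$ term. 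Combining with the bad-rounds bound and the $2\rho^{\nSW}_T(\delta)$ prefactor from Corollary \ref{prop_SW_GLM_anytime_upper} yields the stated high-probability upper bound on $R_T$.
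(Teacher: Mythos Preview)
Your approach is essentially the paper's: same good/bad decomposition, same use of Corollary~\ref{prop_SW_GLM_anytime_upper}, same Cauchy--Schwarz step, and your blocking argument is precisely what the paper invokes as Proposition~9 of \citet{russac2019weighted}. There is, however, one technical slip. The elliptical potential inequality you state,
\[
\sum_{t\in\text{block}}\lVert A_t\rVert_{\bar V_{t-1}^{-1}}^2 \le 2d\log\!\left(1+\frac{c_\mu L^2\tau}{d\lambda}\right),
\]
is obtained from the identity $\sum_t\log(1+\lVert A_t\rVert_{\bar V_{t-1}^{-1}}^2)=\log(\det \bar V_{\text{end}}/\det \bar V_{\text{start}})$ via $x\le 2\log(1+x)$, which requires $x\le 1$. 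Nothing in the assumptions guarantees $\lVert A_t\rVert_{\bar V_{t-1}^{-1}}^2\le 1$: that would need $\lambda\ge c_\mu L^2$, which is not assumed. Without this, the first term of each block alone can exceed the right-hand side.

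The paper closes this gap by retaining the trivial bound $\mu(A_{t,\star}^\top\theta_t^\star)-\mu(A_t^\top\theta_t^\star)\le \X$ inside the sum, so that the per-round contribution is $\min\{\X,\,2\rho_t^{\nSW}(\delta)\lVert A_t\rVert_{V_{t-1}^{-1}}\}$; then, using $\X\le 2\rho_T^{\nSW}(\delta)$, this becomes $2\rho_T^{\nSW}(\delta)\min\{1,\lVert A_t\rVert_{V_{t-1}^{-1}}\}$, and after Cauchy--Schwarz the quantity to bound is $\sum_t\min\{1,\lVert A_t\rVert_{V_{t-1}^{-1}}^2\}$, for which the blockwise elliptical potential bound holds unconditionally. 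Inserting this $\min$ at the outset of your argument fixes the gap with no other changes.
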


\begin{proof}
In the following proof let $\rho$ denote $\rho^{\nSW}$.
\begin{equation*}
\begin{split}
R_T &= \sum_{t=1}^T (\mu(A_{t,\star}^{\top} \theta^{\star}_t) - \mu(A_{t}^{\top} \theta^{\star}_t)) \\ 
& \leq \X \Gamma_T \tau + \hspace{-0.2cm}\sum_{t \in \mathcal{T}(\tau)}  \min\{\X ,\mu(A_{t,\star}^{\top} \theta^{\star}_t) - \mu(A_{t}^{\top} \theta^{\star}_t)\} 
\end{split}
\end{equation*}

where in the last inequality the instantaneous regret
$ \forall t \not \in \mathcal{T}(\tau)$ was upper-bounded by
$\X$. Using Corollary \ref{prop_SW_GLM_anytime_upper} with probability
$\geq 1- 2\delta$

\begin{equation*}
\begin{split}
R_T &\leq  \X \Gamma_T \tau +   \sum_{t \in \mathcal{T}(\tau)} \min\{\X, 2 \rho_t(\delta) \lVert A_t \rVert_{V_{t-1}^{-1}} \} \\
&\leq  \X \Gamma_T \tau +   2 \rho_T(\delta) \sum_{t \in \mathcal{T}(\tau)} \min\{1, \lVert A_t \rVert_{V_{t-1}^{-1}} \} \\
&\leq  \X \Gamma_T \tau +   2 \rho_T(\delta) \sum_{t=1}^T  \min\{1, \lVert A_t \rVert_{V_{t-1}^{-1}} \} \\
&\leq  \X \Gamma_T \tau +   2 \rho_T(\delta) \sqrt{T} \sqrt{\sum_{t=1}^T  \min\{1, \lVert A_t \rVert_{V_{t-1}^{-1}}^2\}} \;.
\end{split}
\end{equation*}
The second inequality holds thanks to $\X \leq 2 \rho(T)$ and the last
inequality is Cauchy–Schwarz. Proposition 9 in Appendix C
of \citet{russac2019weighted} yields
$$
\sum_{t=1}^T  \min\{1, \lVert A_t \rVert_{V_{t-1}^{-1}}^2\} \leq 2d \ceil{T/ \tau} \log \left( 1 + \frac{c_{\mu} L^2 \tau} {\lambda d} \right) ,
$$
which concludes the proof.
\end{proof}

In the following corollary, we denote $\tilde{O}$ the function growth when omitting the logarithmic terms.
\begin{corollary}
\label{corollary:asympt_regret_SW}
If $\Gamma_T$ is known, by choosing
$\tau \hspace{-0.05cm}=
\hspace{-0.05cm}\ceil{(\frac{dT}{\Gamma_T})^{2/3}}$, the regret of the
$\SW$ algorithm is asymptotically upper bounded with high probability
by a term $\tilde{O}(d^{2/3} \Gamma_T^{1/3} T^{2/3})$.

If $\Gamma_T$ is unknown, by choosing $\tau = \ceil{d^{2/3}T^{2/3}}$,
the regret of the $\SW$ algorithm is asymptotically upper bounded with
high probability by a term $\tilde{O}(d^{2/3} \Gamma_T T^{2/3})$.
\end{corollary}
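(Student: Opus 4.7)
The plan is to start from the regret bound of Theorem \ref{theorem_regret_SW} and optimize the sliding window length $\tau$. First, I would identify the asymptotic order of each term. Recall from Equation \eqref{eq_rho_SW} that $\rho_T^{\nSW}(\delta) = \tilde{O}(\sqrt{d})$ since $c_T^{\nSW}(\delta)$ scales as $\sqrt{d \log(\cdot) + \log(T/\delta)}$ and the additive $\sqrt{c_\mu \lambda}S$ is $d$-independent. Substituting into the bound of Theorem \ref{theorem_regret_SW}, the first (exploration) term becomes
\begin{equation*}
\tilde{O}\!\left( \sqrt{d} \cdot \sqrt{T} \cdot \sqrt{d \lceil T/\tau \rceil} \right) = \tilde{O}\!\left( d \, T / \sqrt{\tau} \right),
\end{equation*}
while the second (bias) term is $\X \, \Gamma_T \, \tau = \tilde{O}(\Gamma_T \tau)$.

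Next, I would balance these two contributions. Setting $d T / \sqrt{\tau} \asymp \Gamma_T \tau$ yields $\tau^{3/2} \asymp d T / \Gamma_T$, i.e. $\tau \asymp (d T / \Gamma_T)^{2/3}$, which is exactly the choice in the statement when $\Gamma_T$ is known. Plugging this back gives both terms of order $\tilde{O}(d^{2/3} \Gamma_T^{1/3} T^{2/3})$, as claimed. For the case where $\Gamma_T$ is unknown, I would plug in $\tau = \lceil d^{2/3} T^{2/3} \rceil$ directly: the first term becomes $\tilde{O}(d T / (d^{1/3} T^{1/3})) = \tilde{O}(d^{2/3} T^{2/3})$, and the second term becomes $\tilde{O}(\Gamma_T d^{2/3} T^{2/3})$, so the overall rate is $\tilde{O}(d^{2/3} \Gamma_T T^{2/3})$.

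Two small points require care in the write-up. First, one must verify that $\rho_T^{\nSW}(\delta)$ really contributes only logarithmic factors in $d$ and $T$ so that it can be absorbed into $\tilde{O}$; this follows immediately from Equation \eqref{eq_rho_SW} once the arguments of the logarithms are polynomial in the relevant quantities. Second, the ceiling function and the logarithmic factor $\log(1 + c_\mu L^2 \tau / (d \lambda))$ must be handled: for the prescribed choice of $\tau$ this logarithm is $O(\log T)$, hence absorbed into $\tilde{O}$. The ceiling $\lceil T/\tau \rceil \leq T/\tau + 1$ introduces only a lower-order additive term.

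There is no real obstacle beyond this optimization; the only subtlety is ensuring that the choice of $\tau$ is valid (in particular $\tau \leq T$, which requires $\Gamma_T \leq d T$ for the first case, trivially satisfied in any non-degenerate regime) and that the high probability statement of Theorem \ref{theorem_regret_SW} is preserved under this deterministic choice of $\tau$. The corollary is thus essentially a corollary by substitution.
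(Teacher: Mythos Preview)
Your proposal is correct and follows essentially the same approach as the paper: substitute the prescribed $\tau$ into the bound of Theorem~\ref{theorem_regret_SW}, use $\rho_T^{\nSW}(\delta)=\tilde{O}(\sqrt{d})$, and read off the resulting order. In fact you go slightly further than the paper's appendix proof, which only spells out the known-$\Gamma_T$ case, by also carrying out the substitution for the unknown-$\Gamma_T$ case and by explicitly handling the ceiling and logarithmic factors.
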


This corollary is proved in Appendix \ref{subsec:corollary_regret_SW_asymptotic}.

\subsection{Analysis of $\D$}
\label{subsec:analysis_D}

The main difference when establishing concentration results in the
weighted setting is the need to control the bias term which was
avoided with the sliding window thanks to the condition
$t \in \mathcal{T}(\tau)$. In the weighted setting, with a discount factor
$\gamma$, we introduce
$ \mathcal{T}(\gamma)$ defined as
$$
\mathcal{T}(\gamma) = \{t \leq T \,\,  \textnormal{s. t.} \, \,
 \forall \, t- D(\gamma) < s \leq t, \theta^{\star}_s = \theta^{\star}_t  \}\;,
$$
where $D(\gamma)$ is an analysis parameter that will be specified later. The
main reason for introducing this parameter is to control the
bias. Basically, as in the linear setting, the bias for time instants far
enough from a breakpoint can be upper bounded more roughly than for the others.

\begin{proposition}
\label{prop_D_GLM_concentration}
Let $0< \delta < 1$ and. Let $\tilde{A}_t$ be any 
$\mathcal{A}_t$-valued random variable. Let

\begin{equation*}
\textnormal{c}_t^{\nD}(\delta) = \frac{\X}{2} \sqrt{2 \log(1/\delta) + d \log\left( 1 +
 \frac{c_{\mu} L^2 (1- \gamma^{2t})}{d \lambda (1 - \gamma^2)} \right) }\;,
\end{equation*}
\begin{equation}
\label{eq_rho_D}
\rho_t^{\nD}(\delta) = \frac{2 k_{\mu}}{c_{\mu}}\bigg(  \textnormal{c}_t^{\nD}(\delta)  +  
\sqrt{c_{\mu} \lambda} S +  2L^2 S k_{\mu} \sqrt{\frac{c_{\mu}}{\lambda}}  \frac{\gamma^{D(\gamma)}}{1-\gamma}  \bigg) \;.
\end{equation}
Then simultaneously for all $t \in \mathcal{T}(\gamma)$ 
$$
|\mu(\tilde{A}_t^{\top} \theta^{\star}_t) - \mu(\tilde{A}_t^{\top} 
\tilde{\theta}_t^{\nD}) | \leq \rho_t^{\nD}(\delta) \lVert \tilde{A}_t \rVert_{W_{t-1}^{-1}}\;,
$$
holds with a probability higher than $1-\delta$.
\end{proposition}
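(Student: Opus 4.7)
The plan is to adapt the argument sketched for Proposition~\ref{prop_SW_GLM_concentration}, but with two new difficulties: the sum now runs over the entire history with exponential weights so a non-stationarity \emph{bias term} must be controlled, and the stochastic and design matrices no longer coincide since self-normalized concentration is only available in $\widetilde W_{t-1}^{-1}$-norm while the final statement must be expressed in $W_{t-1}^{-1}$-norm.

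Starting from the first-order optimality condition~\eqref{eq_MLE_D} and substituting $X_s = \mu(A_s^{\top}\theta_s^{\star}) + \eta_s$, I would write
\begin{equation*}
g_{t-1}(\theta_t^{\star}) - g_{t-1}(\hat\theta_t^{\nD}) = S_{t-1} + B_{t-1} - \lambda \theta_t^{\star},
\end{equation*}
with the weighted martingale $S_{t-1} = \sum_{s=1}^{t-1} \gamma^{t-1-s} A_s \eta_s$ and the bias $B_{t-1} = \sum_{s=1}^{t-1} \gamma^{t-1-s}\bigl(\mu(A_s^{\top}\theta_s^{\star}) - \mu(A_s^{\top}\theta_t^{\star})\bigr) A_s$. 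A vector-valued mean value theorem applied to $g_{t-1}$, together with Assumption~\ref{assumption_c_mu}, yields $g_{t-1}(\hat\theta_t^{\nD}) - g_{t-1}(\theta_t^{\star}) = H_{t-1}(\hat\theta_t^{\nD} - \theta_t^{\star})$ with $H_{t-1} \succeq c_\mu W_{t-1}$. Combining this with the Lipschitz property of $\mu$ and Cauchy--Schwarz in an $H_{t-1}$-weighted inner product gives
\begin{equation*}
\begin{split}
|\mu(\tilde A_t^{\top}\theta_t^{\star}) - \mu(\tilde A_t^{\top}\tilde\theta_t^{\nD})| \leq \frac{k_\mu}{c_\mu} \lVert \tilde A_t \rVert_{W_{t-1}^{-1}} \\
\times \lVert g_{t-1}(\theta_t^{\star}) - g_{t-1}(\tilde\theta_t^{\nD})\rVert_{W_{t-1}^{-1}}.
\end{split}
\end{equation*}
Using the norm comparison $\lVert\cdot\rVert_{W_{t-1}^{-1}} \leq \lVert\cdot\rVert_{\widetilde W_{t-1}^{-1}}$ (which holds because $\gamma^{2k} \leq \gamma^k$ implies $\widetilde W_{t-1} \preceq W_{t-1}$), followed by a triangle inequality and the projection step~\eqref{eq:theta_tilde_D} (with $\theta_t^{\star}$ admissible), the problem reduces, up to an overall factor $2$, to controlling $\lVert S_{t-1}\rVert_{\widetilde W_{t-1}^{-1}}$, $\lVert B_{t-1}\rVert_{\widetilde W_{t-1}^{-1}}$ and $\lVert\lambda\theta_t^{\star}\rVert_{\widetilde W_{t-1}^{-1}}$.

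The stochastic term is handled by the weighted self-normalized concentration inequality from Appendix~C of \citet{russac2019weighted}, giving $\lVert S_{t-1}\rVert_{\widetilde W_{t-1}^{-1}} \leq \textnormal{c}_t^{\nD}(\delta)$ with probability at least $1-\delta$, while the regularization contribution is bounded by $\lVert \lambda \theta_t^{\star}\rVert_{\widetilde W_{t-1}^{-1}} \leq \sqrt{c_\mu \lambda}\, S$ thanks to $\widetilde W_{t-1} \succeq (\lambda/c_\mu) I_d$.

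The main obstacle, and the only genuinely new step compared to the sliding-window case, is the control of $\lVert B_{t-1}\rVert_{\widetilde W_{t-1}^{-1}}$. By definition of $\mathcal{T}(\gamma)$ all indices $s$ with $s > t - D(\gamma)$ contribute zero to $B_{t-1}$; for the remaining indices, the Lipschitz property of $\mu$ together with Assumptions~\ref{assumption_actions} and~\ref{assumption_param} give $|\mu(A_s^{\top}\theta_s^{\star}) - \mu(A_s^{\top}\theta_t^{\star})| \leq 2 k_\mu L S$, while $\widetilde W_{t-1} \succeq (\lambda/c_\mu) I_d$ yields $\lVert A_s \rVert_{\widetilde W_{t-1}^{-1}} \leq L \sqrt{c_\mu/\lambda}$. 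Summing the resulting geometric tail of $\gamma^{t-1-s}$ starting at exponent~$D(\gamma)$ produces the term $2 L^2 S k_\mu \sqrt{c_\mu/\lambda}\, \gamma^{D(\gamma)} / (1-\gamma)$ appearing in~\eqref{eq_rho_D}. Gathering the three contributions and absorbing the factor~$2$ from the projection step into the prefactor reproduces exactly $\rho_t^{\nD}(\delta)$.
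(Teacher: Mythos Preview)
Your proposal is correct and follows essentially the same route as the paper's proof. One slip to fix: the mean-value identity must be taken between $\theta_t^{\star}$ and $\tilde\theta_t^{\nD}$, not $\hat\theta_t^{\nD}$, since the lower bound $H_{t-1}\succeq c_\mu W_{t-1}$ relies on Assumption~\ref{assumption_c_mu}, which requires both endpoints of the segment to lie in $\{\lVert\theta\rVert_2\le S\}$---this is precisely why $\tilde\theta_t^{\nD}$ is introduced, and your displayed inequality already uses $\tilde\theta_t^{\nD}$, so this is presumably a typo. The only cosmetic difference from the paper is the organization of the Cauchy--Schwarz step: you bound both factors in $W_{t-1}^{-1}$-norm and then pass the second factor to the $\widetilde W_{t-1}^{-1}$-norm via $\widetilde W_{t-1}\preceq W_{t-1}$, whereas the paper inserts $\widetilde W_{t-1}^{1/2}\widetilde W_{t-1}^{-1/2}$ before applying Cauchy--Schwarz; both yield the same prefactor $k_\mu/c_\mu$.
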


\begin{proofsketch}

  As with the sliding window, we would like to use the concentration
  results established in the linear setting and extend the analysis to
  GLMs. The first step consists in upper bounding
  $|\mu(\tilde{A}_t^{\top} \theta^{\star}_t) - \mu(\tilde{A}_t^{\top}
  \tilde{\theta}^{\nD}_t) | $ with assumption
  \ref{assumption_c_mu}. The upper-bound is a sum of two main
  terms. The first one is related to the weighted martingale
  $S_{t-1} = \sum_{s=1}^{t-1} \gamma^{-s} A_s \eta_s$. The
  self-normalized quantity
  $\lVert S_{t-1} \rVert_{\gamma^{2(t-1)} \widetilde{W}_{t-1}^{-1}}$
  can be upper-bounded with high probability and we use Corollary
  \ref{corollary:S_t} to do so. The next step consists in controlling
  the bias
  $ \lVert \sum_{s=1}^{t-1-D(\gamma)} \gamma^{t-s} (\mu(A_s^{\top}
  \theta^{\star}_t) - \mu(A_s^{\top} \theta^{\star}_s))A_s \rVert_{
    \widetilde{W}^{-1}_{t-1}}$. The assumption
  $t \in \mathcal{T}(\gamma)$ is required at this step to have a
  proper control on this term. By combining the Lipschitz assumption
  (Assumption \ref{assumption_c_mu})
  on the inverse link function and a triangle inequality, the bias
  term can be upper-bounded by
  $2L^2 S k_{\mu} \sqrt{c_{\mu}/\lambda}
  \gamma^{D(\gamma)}/(1-\gamma)$. A detailed proof is available in
  Appendix \ref{subsection:prop_D_GLM_concentration}
\end{proofsketch}

\underline{Remark:} In the linear setting, the bias can be controlled
independently from the stochastic term. For example,
\citet{russac2019weighted} consider a confidence ellipsoid centered
around $\bar{\theta}_t$ (Proposition 3 of \citet{russac2019weighted})
to separate the two terms.
With the particular geometry of the GLMs this is not achievable 
with the estimator we considered and
the bias appears explicitly in the confidence bound as an additive
term.

Proposition \ref{prop_D_GLM_concentration} can now be used
to obtain a high
probability upper bound for the instantaneous regret for all time
instants $t \in \mathcal{T}(\gamma)$. We have the following corollary.

\begin{corollary}
\label{prop_D_GLM_anytime_upper}
Let $0<\delta<1$, and $A_{t, \star} =  \argmax_{a \in \mathcal{A}_t} \mu(a^{\top} \theta^{\star}_t)$. Then,
simultaneously for all $t \in \mathcal{T}(\gamma)$
$$
\mu(A_{t,\star}^{\top} \theta^{\star}_t)- \mu(A_{t}^{\top} \theta^{\star}_t) \leq 2 \rho_t^{\nD}(\delta) \lVert A_t \rVert_{W_{t-1}^{-1}}
$$
holds with probability at least $1-2\delta$.
\end{corollary}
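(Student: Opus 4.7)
The plan is to mirror the argument used for Corollary \ref{prop_SW_GLM_anytime_upper}: apply the concentration bound of Proposition \ref{prop_D_GLM_concentration} twice, once at the optimal action $A_{t,\star}$ and once at the played action $A_t$, and then close the loop using the definition of the UCB rule that produces $A_t$. Since both applications of the proposition concern $\mathcal{A}_t$-valued random variables (the first is deterministic given the environment, the second is $\mathcal{F}_{t-1}$-measurable through the UCB maximization), each event holds with probability $\geq 1-\delta$, and a union bound gives the claimed confidence level $1-2\delta$.

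More concretely, I would first invoke Proposition \ref{prop_D_GLM_concentration} with $\tilde{A}_t = A_{t,\star}$ to get, on an event of probability $\geq 1-\delta$ and simultaneously for all $t \in \mathcal{T}(\gamma)$,
\begin{equation*}
\mu(A_{t,\star}^{\top} \theta^{\star}_t) \leq \mu(A_{t,\star}^{\top} \tilde{\theta}_t^{\nD}) + \rho_t^{\nD}(\delta)\,\lVert A_{t,\star} \rVert_{W_{t-1}^{-1}}.
\end{equation*}
Then I would invoke it again with $\tilde{A}_t = A_t$ to get, on a second event of probability $\geq 1-\delta$,
\begin{equation*}
\mu(A_t^{\top} \tilde{\theta}_t^{\nD}) \leq \mu(A_t^{\top} \theta^{\star}_t) + \rho_t^{\nD}(\delta)\,\lVert A_t \rVert_{W_{t-1}^{-1}}.
\end{equation*}
On the intersection of these two events, which has probability at least $1-2\delta$ by a union bound, I can now invoke the fact that $A_t$ is selected by Algorithm \ref{alg:D-GLM} as the maximizer of $a \mapsto \mu(a^{\top} \tilde{\theta}^{\nD}_t) + \rho_t^{\nD}(\delta)\lVert a \rVert_{W_{t-1}^{-1}}$ over $\mathcal{A}_t$. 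Since $A_{t,\star} \in \mathcal{A}_t$, this yields
\begin{equation*}
\mu(A_{t,\star}^{\top} \tilde{\theta}^{\nD}_t) + \rho_t^{\nD}(\delta)\,\lVert A_{t,\star} \rVert_{W_{t-1}^{-1}} \leq \mu(A_t^{\top} \tilde{\theta}^{\nD}_t) + \rho_t^{\nD}(\delta)\,\lVert A_t \rVert_{W_{t-1}^{-1}}.
\end{equation*}
Chaining the three inequalities eliminates $\tilde{\theta}_t^{\nD}$ and $A_{t,\star}$ on the right-hand side and produces the desired bound $\mu(A_{t,\star}^{\top}\theta^{\star}_t) - \mu(A_t^{\top}\theta^{\star}_t) \leq 2\rho_t^{\nD}(\delta)\,\lVert A_t \rVert_{W_{t-1}^{-1}}$.

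There is no real obstacle here: the work was done in Proposition \ref{prop_D_GLM_concentration}, which already handles both the stochastic self-normalized term and the bias from the exponentially decaying weights through the parameter $D(\gamma)$ and the restriction $t \in \mathcal{T}(\gamma)$. The only small point worth checking is that Proposition \ref{prop_D_GLM_concentration} applies to any $\mathcal{A}_t$-valued random variable (possibly depending on past observations), so that applying it with the random quantity $A_t$, which depends on $\tilde{\theta}_t^{\nD}$ and $W_{t-1}$, is legitimate; this is exactly how the analogous step was carried out in the sliding-window proof of Corollary \ref{prop_SW_GLM_anytime_upper}, and the same justification transfers verbatim.
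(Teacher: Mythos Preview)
Your proposal is correct and is essentially the same argument as the paper's: the paper explicitly states that the proof of Corollary~\ref{prop_D_GLM_anytime_upper} follows that of Corollary~\ref{prop_SW_GLM_anytime_upper}, which decomposes the instantaneous regret into three pieces $A1+A2+A3$, bounds $A1$ and $A3$ via two applications of the concentration proposition (here Proposition~\ref{prop_D_GLM_concentration}) combined by a union bound, and handles $A2$ via the UCB maximization---exactly the chain of inequalities you wrote. The only cosmetic difference is that you present the three inequalities as a chain rather than labeling them $A1,A2,A3$.
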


The proof of this corollary essentially follows the ideas of the proof
of Corollary \ref{prop_SW_GLM_anytime_upper}. The main difference is
the term $\log(1/\delta)$ in the high probability upper-bound (in
$c_t^{\nD}(\delta)$) instead of $\log(T/\delta)$ (in
$c_t^{\nSW}(\delta)$).  This is because in the weighted setting an
anytime deviation bound can be obtained (Corollary
\ref{corollary_anytime_deviation} in Appendix).  On the contrary, with the sliding
window, we cannot avoid the union bound argument to obtain the
concentration result valid for all $t \in \mathcal{T}(\tau)$ which
gives the extra $T$ term.

The reader familiar with the analysis in the weighted linear setting may
be surprised by the presence of the term $\lVert a\rVert_{W_{t-1}^{-1}}$ in the
exploration bonus for $\D$. In fact, one of the conclusion of
\citet{russac2019weighted} was to prove that the exploration term in
the upper confidence bound
must contain the $W_{t-1}^{-1} \widetilde{W}_{t-1} W_{t-1}^{-1}$ norm of $A_t$ 
 (with $c_{\mu} = 1$ in
the linear setting). However,
knowing that $0<\gamma<1$, we have $\gamma^{2(t-s)} \leq \gamma^{t-s}$
for $s \leq t$, implying that $\widetilde{W}_{t-1} \leq
W_{t-1}$. Consequently,
$\lVert a \rVert_{W_{t-1}^{-1} \widetilde{W}_{t-1} W_{t-1}^{-1}} \leq
\lVert a \rVert_{W_{t-1}^{-1}} $. The take home message is that it is
possible to obtain a tighter bound in the linear case with a control
in the $W_{t-1} \widetilde{W}_{t-1}^{-1} W_{t-1}$ norm for the
confidence ellipsoid (Theorem 1 of \citet{russac2019weighted}), while the
exploration term features the $W_{t-1}^{-1}$ norm in the GLM.

%\newpage
\begin{theorem}[Regret of $\D$]
\label{theorem_regret_D}
The regret of the $\D$ policy is upper-bounded with probability  $\geq 1- 2\delta$ by
\begin{equation*}
\begin{split}
R_T \leq  & 2\rho^{\nD}_T(\delta)\sqrt{2dT} \sqrt{T \log \left(\frac{1}{\gamma}\right)\hspace{-0.05cm}+\hspace{-0.05cm}\log\left(1 \hspace{-0.05cm}+\hspace{-0.05cm} 
\frac{c_{\mu} L^2}{d\lambda(1-\gamma)} \right)} \\ 
& + \X \Gamma_T D(\gamma) \;,
\end{split}
\end{equation*}
where
$\rho^{\nD}$ is defined in Equation \eqref{eq_rho_D} and $\Gamma_T$ is the number of changes up to time $T$.
\end{theorem}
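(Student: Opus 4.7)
The plan mirrors the proof of Theorem~\ref{theorem_regret_SW}, but uses the weighted design matrix $W_{t-1}$ and the concentration result of Corollary~\ref{prop_D_GLM_anytime_upper}. Write $\rho = \rho^{\nD}$ for brevity. First I would split the regret along whether the time index lies in $\mathcal{T}(\gamma)$ or not:
\begin{equation*}
\begin{split}
R_T &= \sum_{t=1}^T \big(\mu(A_{t,\star}^{\top}\theta^{\star}_t) - \mu(A_t^{\top}\theta^{\star}_t)\big) \\
&\leq \X\,|\{t : t\notin \mathcal{T}(\gamma)\}| \\
&\quad + \sum_{t\in\mathcal{T}(\gamma)} \min\{\X, \mu(A_{t,\star}^{\top}\theta^{\star}_t) - \mu(A_t^{\top}\theta^{\star}_t)\}.
\end{split}
\end{equation*}
By definition of $\mathcal{T}(\gamma)$, each of the $\Gamma_T$ breakpoints excludes at most $D(\gamma)$ time instants, so the first term is at most $\X\,\Gamma_T D(\gamma)$, which matches the bias-type term in the statement.

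Next, on the good set $\mathcal{T}(\gamma)$, I would apply Corollary~\ref{prop_D_GLM_anytime_upper}, which holds simultaneously for all $t\in\mathcal{T}(\gamma)$ with probability at least $1-2\delta$, and then use monotonicity of $\rho_t$ in $t$ together with $\X \leq 2\rho_T(\delta)$ (which can be checked from the explicit form in Equation~\eqref{eq_rho_D}) to obtain
\begin{equation*}
\begin{split}
R_T &\leq \X\,\Gamma_T D(\gamma) + 2\rho_T(\delta) \sum_{t\in\mathcal{T}(\gamma)} \min\{1, \lVert A_t\rVert_{W_{t-1}^{-1}}\} \\
&\leq \X\,\Gamma_T D(\gamma) + 2\rho_T(\delta) \sum_{t=1}^T \min\{1, \lVert A_t\rVert_{W_{t-1}^{-1}}\}.
\end{split}
\end{equation*}
A Cauchy--Schwarz step then turns this into $2\rho_T(\delta)\sqrt{T}\sqrt{\sum_{t=1}^T \min\{1,\lVert A_t\rVert^2_{W_{t-1}^{-1}}\}}$.

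To close the argument, I would invoke the weighted elliptical potential lemma from \citet{russac2019weighted} (the discounted analogue of the classical log-determinant telescoping), which yields, for the design matrix $W_{t-1} = \sum_{s=1}^{t-1}\gamma^{t-1-s}A_sA_s^{\top} + (\lambda/c_\mu)I_d$, a bound of the form
\begin{equation*}
\sum_{t=1}^T \min\{1,\lVert A_t\rVert^2_{W_{t-1}^{-1}}\} \leq 2d\Bigl(T\log(1/\gamma) + \log\bigl(1+\tfrac{c_\mu L^2}{d\lambda(1-\gamma)}\bigr)\Bigr),
\end{equation*}
which is exactly the quantity that appears under the square root in the theorem. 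Plugging this in and combining with the bias term gives the stated bound.

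The routine part is the regret decomposition and Cauchy--Schwarz; the main conceptual obstacle is the weighted elliptical potential. Unlike the sliding-window case, the exponential weights mean that the $\log\det$ telescoping does not collapse to a single $\log(1+\tfrac{L^2 \tau}{\ldots})$ term: the $T\log(1/\gamma)$ summand arises because each update multiplies $W_{t-1}$ by a factor roughly $\gamma$ before adding the rank-one term, so $\log\det(W_T/W_0)$ picks up a linear-in-$T$ contribution. This is precisely what the cited proposition of \citet{russac2019weighted} quantifies, so the real work is to check that this bound applies verbatim to our $W_{t-1}$ with the regularization $\lambda/c_\mu$ in place of $\lambda$, which is a direct substitution.
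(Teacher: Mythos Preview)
Your proof is correct and follows essentially the same approach as the paper's own proof: the same split over $\mathcal{T}(\gamma)$ with the $\X\Gamma_T D(\gamma)$ bound on the complement, the application of Corollary~\ref{prop_D_GLM_anytime_upper}, the Cauchy--Schwarz step, and finally the weighted elliptical potential lemma (Proposition~4 in Appendix~B of \citet{russac2019weighted}) with regularization $\lambda/c_\mu$. Your closing remark about the $\lambda/c_\mu$ substitution is exactly the only adaptation needed.
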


The proof essentially follows the arguments presented in Theorem
\ref{theorem_regret_SW} and is reported in Appendix
\ref{subsection:regret_D}

\begin{corollary}
 \label{corollary:asympt_regret_D}
 By taking $D(\gamma) = \frac{\log(1/(1-\gamma))}{1- \gamma}$,
 \begin{enumerate}
 \item If $\Gamma_T$ is known, by choosing
   $\gamma = 1- (\frac{\Gamma_T}{dT})^{2/3}$, the regret of the $\D$
   algorithm is asymptotically upper bounded with high probability by
   a term $\tilde{O}(d^{2/3} \Gamma_T^{1/3} T^{2/3})$.
\item If $\Gamma_T$ is unknown, by choosing
   $\gamma= 1- (\frac{1}{dT})^{2/3}$, the regret of the $\D$ algorithm
   is asymptotically upper bounded with high probability by a term
   $\tilde{O}(d^{2/3} \Gamma_T T^{2/3})$.
 \end{enumerate}
\end{corollary}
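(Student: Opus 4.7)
The plan is to substitute the two parameter choices into the regret bound of Theorem~\ref{theorem_regret_D} and reduce everything to a trade-off of the form $dT\sqrt{1-\gamma} + \Gamma_T/(1-\gamma)$ modulo logarithmic factors. To get there, I first need to simplify the three ``slowly moving'' quantities that appear in the bound: the confidence width $\rho^{\nD}_T(\delta)$, the square-root factor $\sqrt{T\log(1/\gamma)+\log(1+c_\mu L^2/(d\lambda(1-\gamma)))}$, and the bias penalty $\X\Gamma_T D(\gamma)$.

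First I would show that with $D(\gamma) = \log(1/(1-\gamma))/(1-\gamma)$ the bias contribution inside $\rho^{\nD}_T(\delta)$ is bounded by a constant. The key elementary estimate is $\log\gamma \leq \gamma-1$, which gives $\gamma^{D(\gamma)} \leq e^{-D(\gamma)(1-\gamma)} = 1-\gamma$, so $\gamma^{D(\gamma)}/(1-\gamma) \leq 1$. Combined with $(1-\gamma^{2t})/(1-\gamma^2) \leq 1/(1-\gamma)$ inside $c_t^{\nD}(\delta)$, this yields $\rho^{\nD}_T(\delta) = \tilde O(\sqrt d)$, where $\tilde O$ absorbs factors polylogarithmic in $T$, $d$ and $1/(1-\gamma)$. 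Next, using $\log(1/\gamma) \leq (1-\gamma)/\gamma \leq 2(1-\gamma)$ for $\gamma \geq 1/2$, the square-root factor is $\tilde O(\sqrt{T(1-\gamma)})$. Multiplying these pieces together, the first term of Theorem~\ref{theorem_regret_D} is $\tilde O(\sqrt d\cdot\sqrt{dT}\cdot\sqrt{T(1-\gamma)}) = \tilde O\bigl(dT\sqrt{1-\gamma}\bigr)$, while the second term is $\X\Gamma_T D(\gamma) = \tilde O\bigl(\Gamma_T/(1-\gamma)\bigr)$.

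It then remains to optimize in $\gamma$. For the known-$\Gamma_T$ case, substituting $1-\gamma = (\Gamma_T/(dT))^{2/3}$ makes the two terms equal: $dT\,(\Gamma_T/(dT))^{1/3} = \Gamma_T\,(dT/\Gamma_T)^{2/3} = d^{2/3}\Gamma_T^{1/3} T^{2/3}$, which is the announced rate. For the unknown-$\Gamma_T$ case, the same computation with $1-\gamma = (1/(dT))^{2/3}$ gives a first term of order $(dT)^{2/3}$ and a second term of order $\Gamma_T(dT)^{2/3}$, whose sum is $\tilde O(d^{2/3}\Gamma_T T^{2/3})$.

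The only place one has to be slightly careful is verifying that all hidden logarithmic factors ($\log(1/\delta)$, $\log(1/(1-\gamma))$ from $D(\gamma)$ and from the volume-type $\log(1+\cdots)$ terms) can indeed be swept into $\tilde O$ under the chosen scaling of $\gamma$: with $1-\gamma \geq (1/(dT))^{2/3}$ we have $\log(1/(1-\gamma)) = O(\log(dT))$, so every such factor is polylogarithmic in $d$ and $T$ and does not affect the stated rate. This step is the most mechanical, but also the easiest to mis-handle, and it is the main thing I would verify carefully; the rest is an elementary balancing argument identical in spirit to the proof of Corollary~\ref{corollary:asympt_regret_SW}.
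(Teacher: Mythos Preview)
Your proposal is correct and follows essentially the same route as the paper: bound $\gamma^{D(\gamma)}/(1-\gamma)$ by a constant via the choice of $D(\gamma)$, reduce $\rho_T^{\nD}$ to $\tilde O(\sqrt d)$ and the square-root factor to $\tilde O(\sqrt{T(1-\gamma)})$, and then balance $dT\sqrt{1-\gamma}$ against $\Gamma_T/(1-\gamma)$. The only cosmetic difference is that you use explicit elementary inequalities ($\log\gamma\le\gamma-1$, $\log(1/\gamma)\le(1-\gamma)/\gamma$) where the paper works with asymptotic equivalences, which if anything makes your argument slightly cleaner.
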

This corollary is proved in Appendix \ref{subsec:asympt_regret_D}.

\section{Experiments}

In this section, we evaluate the empirical performance of the two proposed algorithms. In a first part, we reproduce the simulation proposed 
in an abruptly changing environment in \citet{russac2019weighted}. It consists in a two-dimensional problem 
with 3 different breakpoints. 
The theoretical aspects developed in the previous sections suggest that $\SW$ and $\D$ should have
better performance than generalized linear bandit algorithms that do not 
take into account the non-stationarity. In a second part, we use a real world dataset
 to test the performances of the algorithms on a 9-dimensional problem where non-stationarity 
 is artificially created.

\subsection{Simulated environment}

In this simulated environment, we compare different generalized linear bandits algorithms and 
linear bandits algorithms when
 the inverse link function is the sigmoid $\mu(x) = 1/(1+ \exp(-x))$: the $\SW$
 algorithm using a sliding window, the $\D$ algorithm based on the use
  of exponentially increasing weights and the stationary
  algorithm, where the maximum likelihood estimator is solution of Equation 
  \eqref{eq_log_likelihood}.
  Additionally 
  to those three algorithms, we add their
  linear counterpart, $\Lin$ as in \cite{abbasi2011improved}, $\SWLin$ as in
   \citep{cheung2019learning} and $\DLin$ as presented in \citep{russac2019weighted}.
  Those three algorithms do not assume that the rewards are generated 
  by a logistic function and use a misspecified linear model; we expect them to have higher regrets.
  
    \begin{figure}[h]
        \centering
        \includegraphics[width=0.42\textwidth]{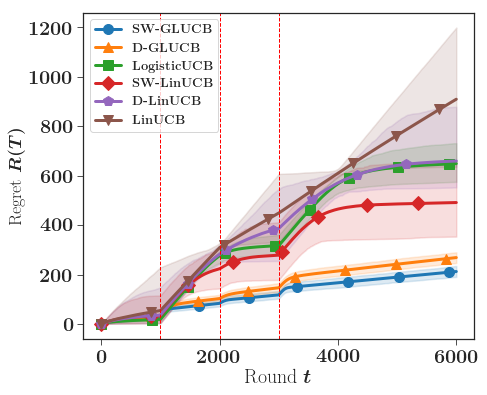}
        \caption{Regret of the different algorithms in a 2D abruptly changing environment
        and the $5\%$ quantiles averaged on 500 independent runs}
        \label{fig:regret_2d}
\end{figure}

  In this experiment the number of rounds is set to $T=6000$. $\theta^{\star}_t$ 
  the parameter in the logistic function is evolving over time: before $t=1000$,
  $\theta^{\star}_t = (1,0)$; for $1001 \leq t \leq 2000$,    $\theta^{\star}_t = (-1,0)$; 
  for $2001 \leq t \leq 3000$,    $\theta^{\star}_t = (0,1)$ 
  and for $ t > 3000$,    $\theta^{\star}_t = (0,-1)$. The position
   of $\theta^{\star}$ at the different periods are represented by the light blue triangles in the 
  scatter plot in Figure
  \ref{fig:scatter_2d}. 
    The locations of the changepoints are also represented on Figure \ref{fig:regret_2d}
     by the red dashed vertical lines. 
     In this problem, $\theta^{\star}$ is widely spread over
      the 2 dimensional unit ball. At each round $K=6$ actions randomly
       generated 
  in the unit ball are presented to the different algorithms. The instantaneous
   regret in round $t$ is defined as $r_t = \max_{a \in \{A_{t,1},... A_{t,6} \} } 
  \mu(a^{\top} \theta^{\star}_t) -  \mu(A_t^{\top} \theta^{\star}_t) $, where
   $A_t$ is the action chosen by the algorithm.
  In Figure \ref{fig:regret_2d} the cumulative dynamic regret of
   the different algorithms averaged on 500 independent runs is represented.  The shaded region 
   correspond to the $5\%$ and the $95\%$ quantiles for the cumulative regrets of the different algorithms. 
   We can see that the variation of the performance is much larger for linear bandits algorithms than
   for the generalized linear bandits algorithms, a potential reason for this is that the confidence ellipsoid
   for the linear algorithms do not hold if the linear assumption of the rewards is not satisfied.

 \begin{figure}[hbt]
        \centering
        \includegraphics[width=0.41\textwidth]{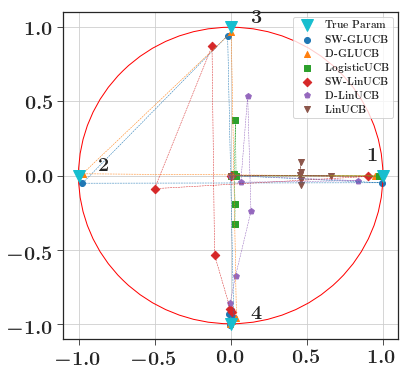}
        \caption{Estimated parameter ($\hat{\theta}_t$) every 1000 steps for 
        the different algorithms in a 2D abruptly changing environment averaged on 500 independent runs.}
        \label{fig:scatter_2d}
\end{figure}

 After the different changepoints a 3000 rounds stationary period is
 added to check if the estimators of the different algorithms converge to the true parameter. 
In Figure \ref{fig:scatter_2d}, the estimator $\hat{\theta}_t$ is plotted 
every 1000 rounds for the different algorithms. We expect well-performing algorithms
to approach the ground truth $\theta^{\star}$.
The evolution of $\theta^{\star}_t$ requires the different algorithms to adapt to the changes.
$\Log$ and $\Lin$ fail in doing so. The failure is even worse for $\Lin$ because 
the algorithm does not leverage the logistic function information and does not converge, even after the 
stationary period corresponding to the second half of the experiment. On the scatter plot,
 the estimator for $\Lin$ never approaches the ground truth which 
explains the important regret. The $\Log$ estimator catches the ground truth in the first stationary 
period but is not able to adapt to the changes in $\theta^{\star}$ and fails in estimating the evolving parameter. 
If the final stationary period is longer, it will eventually build a better estimator and converge.

The best performing policies are $\SW$ and $\D$. The estimators built in
 those algorithms track the evolving parameter accurately as can be seen on the scatter plot
 on Figure \ref{fig:scatter_2d}.  $\SWLin$
  performs surprisingly well. By progressively forgetting the past, the algorithm builds quite precise estimate 
  of $\theta^{\star}$. Of course, the algorithm is not as precise as $\SW$ because it doesn't rely on the additional logistic 
  assumption on the rewards, which implies a slower convergence to the true unknown parameter.

\subsection{Simulation with a real-world dataset}

In this section, we illustrate the performance of the generalized linear bandits algorithms with a real dataset. In contrast
with the previous simulated environment, the rewards here are not generated by a logistic function but 
are the target variable of the dataset.
We use the Pima Indian Diabetes Database \footnote{The dataset can be
 downloaded \href{https://www.kaggle.com/uciml/pima-indians-diabetes-database}{here}.} 
where the  aim is to predict if a patient has diabetes or not. The 
predictions are based on 8 variables characterizing the different 
patients: number of pregnancies,
the glucose level, the blood pressure, the thickness of the skin, insulin, 
the body mass index, the diabetes pedigree function and the age.

All the variables are numerical and the processing step consists
 in centering and standardizing the different variables.
The outcome variable is binary and has the value 1 if the
 patient has diabetes. We run a 2000 steps experiment
designed as follows: at each round, a 
patient without diabetes and a patient with diabetes are 
randomly selected and  proposed to the different algorithms. The reward is $+1$ 
if the patient with diabetes was selected by the algorithm.
We artificially create non-stationarity by inverting the population of diabetic and non-diabetic patients
at time $t= 1000$.
This change corresponds to a large perturbation but the algorithms
that progressively forget the past should be able to adapt to the
change and progressively recover a classification performance
comparable to the level attained in the first segment.

\begin{figure}[hbt]
        \centering
        \includegraphics[width=0.45\textwidth]{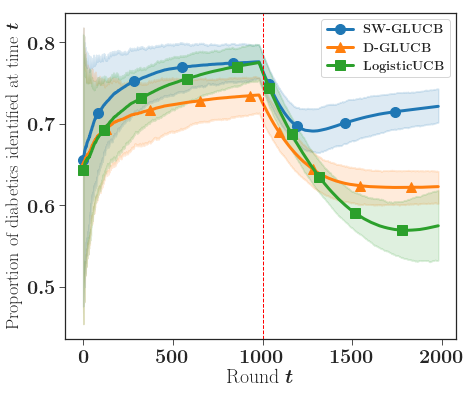}
        \caption{Proportion of diabetic patients detected at time $t$ in
         an artificially created non-stationary environment averaged on 500 independent runs.}
        \label{fig:PIMA}
\end{figure}

We report in Figure \ref{fig:PIMA} the proportion of
 diabetic patients detected averaged on 500 independent runs. Here, contrarily to the simulated 
 environment, the rewards are not generated 
 with a logistic function but the taken from the original dataset. Hence, we cannot directly evaluate the regret and the learning is more complex 
 because the model misspecified. Nevertheless, even in this
  setting $\SW$ and $\D$ are able to learn continuously.
After the changepoint, the diabetic patients are harder to detect
 and the averaged cumulative sum decreases for all the algorithms. The recovery is much faster
 for $\SW$ and $\D$ than for the stationary logistic bandit model.
Although very simplistic, this experiment suggests that the proposed
algorithms are robust enough to be successfully used for online
bandit learning in realistic non-stationary binary regression tasks.
 
%\clearpage
\bibliography{preprint}
\bibliographystyle{abbrvnat}

\clearpage

\onecolumn
\appendix

\begin{center}
\large
Supplementary for \\
\Large
Algorithms for Non-Stationary Generalized Linear Bandits
\vspace{10mm}
\end{center}

\section{Proof for the sliding window GLM}

\subsection{Proof of Proposition \ref{prop_SW_GLM_concentration}}
\label{subsection:prop_SW_GLM_concentration}

  \begin{customprop}{1}
Let $0<\delta <1$ and 
$t \in \mathcal{T}(\tau)$. Let $\tilde{A}_t$ be any 
$\mathcal{A}_t$-valued random variable. Let

\begin{equation*}
\textnormal{c}_t^{\nSW}(\delta) = \frac{\X}{2} \sqrt{2 \log(T/\delta) + d \log\left( 1 + \frac{c_{\mu} L^2 \min(t, \tau)}{d \lambda} \right) } 
\end{equation*}
\begin{equation*}\hbox{and \quad}
\rho_t^{\nSW}(\delta) = \frac{2 k_{\mu}}{c_{\mu}}  \bigg( \textnormal{c}^{\nSW}_t(\delta)
+ \sqrt{c_{\mu} \lambda} S \bigg)  \;.
\end{equation*}

Then, simultaneously for all $t \in \mathcal{T}(\tau)$,
$$
\big|\mu(\tilde{A}_t^{\top} \theta^{\star}_t) - \mu(\tilde{A}_t^{\top} 
\widetilde{\theta}_t^{\nSW}) \big| \leq \rho_t^{\nSW}(\delta)\lVert \tilde{A}_t \rVert_{V_{t-1}^{-1}}
$$
holds with probability higher than $1- \delta$.
\end{customprop}
  
\begin{proof}
We define $g_{t-1}: \mathbb{R}^d \mapsto \mathbb{R}^d$ by $g_{t-1}(\theta) = 
\sum_{s= \max(1, t- \tau)}^{t-1} \mu(A_s^{\top} \theta) A_s + \lambda \theta$. Let $J_{t-1}$ denotes the Jacobian matrix of $g_{t-1}$.  We have
$J_{t-1}(\theta) = \sum_{s=\max(1, t- \tau)}^{t-1} \dot{\mu}(A_s^{\top} \theta) A_s A_s^{\top} + \lambda I_d$.

Thanks to the definition of the estimator $\hat{\theta}^{\nSW}_t$ defined in Equation \eqref{eq_MLE_SW}, we have $g_{t-1}(\hat{\theta}_t^{\nSW}) =
 \sum_{s= \max(1, t-\tau)}^{t-1} A_s X_s$. We also introduce the martingale $S_{t-1} = \sum_{s= \max(1, t- \tau)}^{t-1} A_s \eta_s$. In the following proof,
 we use $\tilde{\theta}_t$ instead of $\tilde{\theta}_t^{\nSW}$.
 
 We define the $G_{t-1}(\theta^{\star}_t,\tilde{\theta}_t)$ matrix as follows,
 $$
 G_{t-1}(\theta^{\star}_t,\tilde{\theta}_t) = \int_{0}^1 J_{t-1}( u \theta^{\star}_t + (1-u) \tilde{\theta}_t) \, du\;.
 $$
 The Fundamental Theorem of Calculus gives
 \begin{equation}
\label{eq_g_t_G_t} 
 g_{t-1}(\theta^{\star}_t) -  g_{t-1}(\tilde{\theta}_t)  = G_{t-1}(\theta^{\star}_t,\tilde{\theta}_t)(\theta^{\star}_t- \tilde{\theta}_t)\;.
  \end{equation}
Knowing that both $\theta^{\star}_t$ and $\tilde{\theta}_t$ have an L2-norm smaller than
 $S$, $\forall u \in [0,1],  \lVert u \theta^{\star}_t + (1-u) \tilde{\theta}_t) \rVert_2 \leq S$.
This implies in particular that
\begin{equation}
\label{eq_G_t}
G_{t-1}(\theta^{\star}_t,\tilde{\theta}_t) \geq c_{\mu} \left( \sum_{s= \max(1,t-\tau)}^{t-1} A_s A_s^{\top} + \frac{\lambda} {c_{\mu}} I_d  \right) = c_{\mu} V_{t-1}\;,
\end{equation}
which in turn ensures $G_{t-1}(\theta^{\star}_t,\tilde{\theta}_t)$ is invertible.

Let $\tilde{A}_t$ be any $\mathcal{A}_t$ valued random variable and $t$ be a fixed time instant,
  \begin{align*}
 |\mu(\tilde{A}_t^{\top}
 \theta^{\star}_t) - \mu(\tilde{A}_t^{\top} 
\tilde{\theta}_t) | &\leq k_{\mu}|\tilde{A}_t^{\top}(\theta^{\star}_t-
 \tilde{\theta}_t) | \quad \textnormal{(Assumption \ref{assumption_c_mu})}\\
 &= k_{\mu} |\tilde{A}_t^{\top}  G_{t-1}^{-1}(\theta^{\star}_t,\tilde{\theta}_t)
 (g_{t-1}(\theta^{\star}_t)-g_{t-1}(\tilde{\theta}_t))| \quad \textnormal{(Equation (\eqref{eq_g_t_G_t}))} \\
 &\leq k_{\mu} \lVert \tilde{A}_t \rVert_{G_{t-1}^{-1}(\theta^{\star}_t,\tilde{\theta}_t)} \lVert g_{t-1}(\theta^{\star}_t)-g_{t-1
 }(\tilde{\theta}_t)
 \rVert_{G_{t-1}^{-1}(\theta^{\star}_t,\tilde{\theta}_t)} \quad \textnormal{(C-S)}\\
  &\leq \frac{k_{\mu}}{c_{\mu}}  \lVert \tilde{A}_t \rVert_{V_{t-1}^{-1}} \lVert g_{t-1}(\theta^{\star}_t)-g_{t-1}(\tilde{\theta}_t)\rVert_{V_{t-1}
  ^{-1}} \quad \textnormal{(Equation (\eqref{eq_G_t}))}  \\
&  \leq \frac{2k_{\mu}}{c_{\mu}}  \lVert \tilde{A}_t \rVert_{V_{t-1}^{-1}} \lVert g_{t-1}(\theta^{\star}_t)-g_{t-1}(\hat{\theta}_t^{\nSW}
)\rVert_{V_{t-1}^{-1}} \quad \textnormal{(Definition of $\tilde{\theta}_t$)} \\
  &\leq \frac{2k_{\mu}}{c_{\mu}}  \lVert \tilde{A}_t \rVert_{V_{t-1}^{-1}} 
  \lVert \sum_{s= \max(1, t-\tau)}^{t-1}  \mu(A_s^{\top} \theta^{\star}_t) A_s
  + \lambda \theta^{\star}_t - \sum_{s= \max(1, t-\tau)}^{t-1} A_s X_s  
  \rVert_{V_{t-1}^{-1}}  \\
 &\leq \frac{2k_{\mu}}{c_{\mu}}  \lVert \tilde{A}_t \rVert_{V_{t-1}^{-1}} 
  \lVert \sum_{s= \max(1, t-\tau)}^{t-1}  (\mu(A_s^{\top} \theta^{\star}_t) - \mu(A_s^{\top} \theta^{\star}_s) )A_s
  -\sum_{s= \max(1, t-\tau)}^{t-1} A_s \eta_s  + \lambda \theta^{\star}_t
  \rVert_{V_{t-1}^{-1}}  \\
 &\leq \frac{2k_{\mu}}{c_{\mu}}  \lVert \tilde{A}_t \rVert_{V_{t-1}^{-1}} 
  \lVert 
  -S_{t-1}  + \lambda \theta^{\star}_t
  \rVert_{V_{t-1}^{-1}} \quad \textnormal{( $t \in \mathcal{T}(\tau)$ )} \\
 &\leq \frac{2k_{\mu}}{c_{\mu}}  \lVert \tilde{A}_t \rVert_{V_{t-1}^{-1}} 
  \left(\lVert 
  S_{t-1} \rVert_{V_{t-1}^{-1}}  + \lVert \lambda \theta^{\star}_t
  \rVert_{V_{t-1}^{-1}} \right) \quad \textnormal{(Triangle inequality)}\\
   &\leq \frac{2k_{\mu}}{c_{\mu}}  \lVert \tilde{A}_t \rVert_{V_{t-1}^{-1}} 
  \left(\lVert 
  S_{t-1} \rVert_{V_{t-1}^{-1}}  + \sqrt{\lambda c_{\mu}} \lVert \theta^{\star}_t
  \rVert_{2} \right) \quad (V_{t-1} \geq \frac{\lambda}{c_{\mu}} I_d) \\
  &\leq \frac{2k_{\mu}}{c_{\mu}}  \lVert \tilde{A}_t \rVert_{V_{t-1}^{-1}} 
  \left( \frac{\X}{2} \sqrt{2\log(1/\delta) + d \log\left( 1 + \frac{c_{\mu} L^2 \min(t, \tau)}{d\lambda}\right)}+ \sqrt{\lambda c_{\mu}} S
   \right) \quad (\textnormal{with h.p.}) \;.
  \end{align*}
   
In the last inequality we have used the concentration result established
 in the Proposition 5 of \citet{russac2019weighted} for the self-normalized quantity $\lVert 
  S_{t-1} \rVert_{V_{t-1}^{-1}}$, and
 the assumption $\forall t,  \lVert \theta^{\star}_t \rVert_2 \leq S$. 
 To obtain the concentration result for all $t \in \mathcal{T}(\tau)$ we use a union bound.
 The final statement holds with probability $\geq 1-\delta$.
\end{proof}

\subsection{Proof of Corollary \ref{prop_SW_GLM_anytime_upper}}
\label{subsec:prop_corollary_prop_SW_GLM_anytime}

\begin{customcor}{1} 

Let $0<\delta<1$, and $A_{t, \star} =  \argmax_{a \in \mathcal{A}_t} \mu(a^{\top}
 \theta^{\star}_t)$. Then,
simultaneously for all $t \in \mathcal{T}(\tau)$
$$
\mu(A_{t,\star}^{\top} \theta^{\star}_t)- \mu(A_{t}^{\top} \theta^{\star}_t) \leq 2 \rho^{\nSW}_t(\delta) \lVert A_t \rVert_{V_{t-1}^{-1}}
$$
holds with probability at least $1-2\delta$.
\end{customcor}

\begin{proof} In the following proof,
 we abbreviate $\tilde{\theta}_t^{\nSW}$ to $\tilde{\theta}_t$.
\begin{align*}
\mu(A_{t,\star}^{\top} \theta^{\star}_t)- \mu(A_{t}^{\top} \theta^{\star}_t) = \underbrace{\mu(A_{t,\star}^{\top} \theta^{\star}_t)- \mu(A_{t,\star}^{\top}\tilde{\theta}_t)}_{A1} 
+\underbrace{\mu(A_{t,\star}^{\top}\tilde{\theta}_t)-  \mu(A_{t}^{\top} \tilde{\theta}_t)}_{A2}
+  \underbrace{\mu(A_{t}^{\top} \tilde{\theta}_t)-\mu(A_{t}^{\top} \theta^{\star}_t)}_{A3}
\end{align*}

Thanks to Proposition \ref{prop_SW_GLM_concentration}, we can give an upper bound for the term $A1$ and for the term $A3$.
Upper bounding $A2$ with high probability requires extra-work.

With a union bound, we can simultaneously upper bound $A1$ and $A3$ for all $t \in \mathcal{T}(\tau)$ and the following holds

\begin{equation}
\label{eq:simult_upper_bound}
\mathbb{P}\left(\forall t \in \mathcal{T}(\tau), \mu(A_{t,\star}^{\top} \theta^{\star}_t)- \mu(A_{t,\star}^{\top}\tilde{\theta}_t)
 \leq \rho^{\nSW}_t(\delta) \lVert A_{t,\star} \rVert_{V_{t-1}^{-1}} \,\cap 
\,  \mu(A_{t}^{\top} \tilde{\theta}_t)-\mu(A_{t}^{\top} \theta^{\star}_t) 
\leq \rho^{\nSW}_t(\delta)\lVert A_{t} \rVert_{V_{t-1}^{-1}} \right) \geq 1- 2\delta
\end{equation}

Let $E$ denote this event. The upper confidence at time $t$ for an action $a$ is defined by,
$$
\textnormal{UCB}_t(a) = \mu(a^{\top} \tilde{\theta}_t) + \rho^{\nSW}_t(\delta) \lVert a \rVert_{V_{t-1}^{-1}},
$$

The action chosen at time $t$,  $A_t$ is the action maximizing $\textnormal{UCB}_t(a)$ for $a \in \mathcal{A}_t$.

\begin{align*}
A2 &= \mu(A_{t,\star}^{\top}\tilde{\theta}_t)-  \mu(A_{t}^{\top} \tilde{\theta}_t) \\
&= \mu(A_{t,\star}^{\top}\tilde{\theta}_t) + \rho^{\nSW}_t(\delta) \lVert A_{t,\star} \rVert_{V_{t-1}^{-1}} - 
\rho^{\nSW}_t(\delta) \lVert  A_{t, \star} \rVert_{V_{t-1}^{-1}}
 -  \mu(A_{t}^{\top} \tilde{\theta}_t)  \\
&\leq \mu(A_{t}^{\top}\tilde{\theta}_t) + \rho^{\nSW}_t(\delta) \lVert A_{t} \rVert_{V_{t-1}^{-1}}
- \rho^{\nSW}_t(\delta) \lVert  A_{t, \star} \rVert_{V_{t-1}^{-1}}
 -  \mu(A_{t}^{\top} \tilde{\theta}_t)
 \quad \textnormal{(Definition of $A_t$)}\\
&\leq \rho^{\nSW}_t(\delta) \lVert A_{t} \rVert_{V_{t-1}^{-1}}
- \rho^{\nSW}_t(\delta) \lVert  A_{t, \star} \rVert_{V_{t-1}^{-1}}.
\end{align*}

Under the event $E$, that occurs with a probability higher than $1-\delta$,
\begin{align*}
\mu(A_{t,\star}^{\top} \theta^{\star}_t)- \mu(A_{t}^{\top} \theta^{\star}_t)  &\leq \underbrace{\rho^{\nSW}_t(\delta) 
\lVert A_{t,\star} \rVert_{V_{t-1}^{-1}}}_{\textnormal{coming from} A1} + 
\underbrace{\rho^{\nSW}_t(\delta) \lVert A_{t} \rVert_{V_{t-1}^{-1}}
- \rho^{\nSW}_t(\delta) \lVert  A_{t, \star} \rVert_{V_{t-1}^{-1}}}_{\textnormal{coming from} A2} 
+ \underbrace{\rho^{\nSW}_t(\delta) \lVert A_{t} \rVert_{V_{t-1}^{-1}}}_{\textnormal{coming from} A3} \\
&\leq 2 \rho^{\nSW}_t(\delta) \lVert A_t \rVert_{V_{t-1}^{-1}}.
\end{align*}
\end{proof}

\subsection{Proof of Corollary \ref{corollary:asympt_regret_SW}}
\label{subsec:corollary_regret_SW_asymptotic}
\begin{customcor}{2}
If $\Gamma_T$ is known, by choosing $\tau \hspace{-0.05cm}= \hspace{-0.05cm}\ceil{(\frac{dT}{\Gamma_T})^{2/3}}$, the regret of the $\SW$ algorithm is asymptotically upper bounded
with high probability by a term $\tilde{O}(d^{2/3} \Gamma_T^{1/3} T^{2/3})$.

If $\Gamma_T$ is unknown, by choosing $\tau = \ceil{d^{2/3}T^{2/3}}$, the regret of the $\SW$ algorithm is asymptotically upper bounded
with high probability by a term $\tilde{O}(d^{2/3} \Gamma_T T^{2/3})$.
\end{customcor}

 \begin{proof}
 With this particular choice of $\tau$ we have:
 $$ \tau \Gamma_T \sim d^{2/3} T^{2/3} \Gamma_T^{1/3} $$
$$ \rho^{\nSW}_T(\delta) \sim \sqrt{d \log(T)} $$
$$\sqrt{T} \sqrt{\ceil{T/\tau}} \sim d^{-1/3}T^{1-1/3} \Gamma_T^{1/3}$$

 Therefore the behavior of $ \rho^{\nSW}_T(\delta) \sqrt{dT} 
 \sqrt{\ceil{T/ \tau}} \sqrt{ \log\left(  1+ \frac{\tau L^2}{\lambda d} \right)}$  is similar 
 to $d^{2/3} \Gamma_T^{1/3} T^{2/3}\sqrt{\log(T)}\sqrt{\log(T/\Gamma_T)}$.

 By neglecting the logarithmic term, we have with high probability, 
$$ 
R_T = \widetilde{O}_{T \to \infty}(d^{2/3} \Gamma_T^{1/3}T^{2/3})\;.
$$
 \end{proof}

%%%%%%%%%%%%%%%%%%%%%%%%%%%%%
%%%%%%%%%%%%%%%%%%%%%%%%%%
%%%%%%%%%%%%%%%%%%%%
%%%%%%%%%%%%%%%
%%%%%%%%%%
%%%%%%
%%%
%

\section{Proof for the discounted GLM}

\subsection{Self-normalized concentration result}
\begin{corollary}[Corollary 3 of \citet{russac2019weighted}]
\label{corollary_anytime_deviation}
$\forall \delta > 0$, with $S_t = \sum_{s=1}^t \gamma^{-s} A_s \eta_s$, $\widetilde{V}_{t} = \sum_{s=1}^{t} \gamma^{-2s} A_s A_s^{\top} + \frac{\lambda \gamma^{-2t}}{c_{\mu}} I_d$ and  when $(\eta_s)_{s\geq 1}$ are $\sigma$-subgaussian conditionally on the past, we have
\label{corollary:S_t}
\begin{align*}
\mathbb{P}\left(\exists t \geq 0,\lVert S_t \rVert_{\widetilde{V}_t^{-1}}\geq \sigma\sqrt{ 2\log\left(\frac{1}{\delta}\right) +d \log\left(1 +
\frac{ c_{\mu} L^2 (1-\gamma^{2t})}{d \lambda  (1-\gamma^2)}\right)}   \right) \leq \delta\;.
\end{align*}
\end{corollary}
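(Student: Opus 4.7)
The plan is to derive this anytime self-normalized tail bound via the method of mixtures of Abbasi-Yadkori, P\'al and Szepesv\'ari, specialized to the weighted/discounted setting as in \citet{russac2019weighted}. A useful first observation is the rescaling identity $\|S_t\|_{\widetilde{V}_t^{-1}} = \|\tilde{S}_t\|_{\widetilde{W}_t^{-1}}$, where $\tilde{S}_t = \gamma^t S_t = \sum_{s=1}^t \gamma^{t-s}A_s\eta_s$ and $\widetilde{W}_t = \gamma^{2t}\widetilde{V}_t = \sum_{s=1}^t \gamma^{2(t-s)}A_sA_s^\top + (\lambda/c_\mu)I_d$. So the target inequality is really a self-normalized bound for $\tilde{S}_t$ controlled by the (non-exploding) matrix $\widetilde{W}_t$, whose log-determinant is naturally of the stated form.

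For the supermartingale construction, I would introduce, for each $x \in \mathbb{R}^d$,
\[
M_t^x = \exp\!\Bigl(\sum_{s=1}^t \gamma^{-s}\eta_s\langle A_s,x\rangle - \tfrac{\sigma^2}{2}\sum_{s=1}^t \gamma^{-2s}\langle A_s,x\rangle^2\Bigr).
\]
Conditional $\sigma$-subgaussianity of $\eta_s$ applied with the $\mathcal{F}_{s-1}$-measurable coefficient $\gamma^{-s}\langle A_s,x\rangle$ yields $\mathbb{E}[M_t^x/M_{t-1}^x\mid\mathcal{F}_{t-1}] \leq 1$, so $M_t^x$ is a nonnegative supermartingale with $M_0^x = 1$. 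Mixing against a centred Gaussian prior $\pi$ on $\mathbb{R}^d$ gives $\overline{M}_t = \int M_t^x\,d\pi(x)$, still a nonnegative supermartingale with $\overline{M}_0 = 1$ by Tonelli, so Ville's maximal inequality produces the anytime tail bound $\mathbb{P}(\exists t \geq 0: \overline{M}_t \geq 1/\delta) \leq \delta$.

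The hard step is to pick the Gaussian prior so that completing the square in $\overline{M}_t$ produces the target norm $\|S_t\|_{\widetilde{V}_t^{-1}}^2/(2\sigma^2)$ in the exponent and the log-determinant $\log\det(\widetilde{W}_t)/\det(W_0)$ (with $W_0 = (\lambda/c_\mu)I_d$) as the prefactor. The subtlety is that $\widetilde{V}_t$ contains a \emph{time-varying} regularizer $(\lambda\gamma^{-2t}/c_\mu)I_d$ while any fixed Gaussian prior yields a time-independent regularization; a mechanical application of the Abbasi-Yadkori template would instead give a bound in $(\overline{V}_t + (\lambda/c_\mu)I_d)^{-1}$ norm with an extra $\gamma^{-2t}$ factor hidden inside the log-determinant. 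This is where the rescaling identity becomes essential: the mixing argument can be read off in the $\tilde{S}_t,\widetilde{W}_t$ variables, where the regularizer is the constant $(\lambda/c_\mu) I_d$, and the detailed construction is the one carried out in Corollary~3 of \citet{russac2019weighted}.

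Once the cleaned-up inequality $\|S_t\|_{\widetilde{V}_t^{-1}}^2 \leq 2\sigma^2\log(1/\delta) + \sigma^2\log\det(\widetilde{W}_t)/\det(W_0)$ holds simultaneously in $t$, the remaining step is the standard trace/AM-GM estimate: by AM-GM on the eigenvalues of $W_0^{-1/2}(\widetilde{W}_t - W_0)W_0^{-1/2}$ together with
\[
\operatorname{tr}(\widetilde{W}_t - W_0) = \sum_{s=1}^t \gamma^{2(t-s)}\|A_s\|_2^2 \leq L^2\,\frac{1-\gamma^{2t}}{1-\gamma^2},
\]
one obtains $\log\det(\widetilde{W}_t)/\det(W_0) \leq d\log\bigl(1 + c_\mu L^2(1-\gamma^{2t})/(d\lambda(1-\gamma^2))\bigr)$, which closes the argument. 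The supermartingale construction and the trace/AM-GM estimate are routine; the real work lives in the prior-matching step, which turns the naive Abbasi-Yadkori form into the tight bound with the time-varying regularizer of $\widetilde{V}_t$.
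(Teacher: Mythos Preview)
Your argument is correct and is precisely the method-of-mixtures construction underlying Corollary~3 of \citet{russac2019weighted}; the paper's own proof is much terser, simply invoking that corollary verbatim with the substitution $\lambda \to \lambda/c_{\mu}$ to account for the regularizer $\frac{\lambda}{c_{\mu}}I_d$ used here. In other words, you have reconstructed what the cited reference does, while the paper just points to it.
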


\begin{proof}
The proof is exactly the same than the one proposed in \citet{russac2019weighted}, except that $\tilde{\lambda } = \lambda / c_{\mu} $ is used rather than $\lambda$, which explains the slight difference in the formula proposed in Corollary \ref{corollary:S_t} compared to the original lemma.
\end{proof}

\subsection{Proof of Proposition \ref{prop_D_GLM_concentration}}
\label{subsection:prop_D_GLM_concentration}

\begin{customprop}{2}
Let $0< \delta < 1$ and. Let $\tilde{A}_t$ be any 
$\mathcal{A}_t$-valued random variable. Let

\begin{equation*}
\textnormal{c}_t^{\nD}(\delta) = \frac{\X}{2} \sqrt{2 \log(1/\delta) + d \log\left( 1 + \frac{c_{\mu} L^2 (1- \gamma^{2t})}{d \lambda (1 - \gamma^2)} \right) }
\end{equation*}
\begin{equation*}\hbox{and \quad}
\rho_t^{\nD}(\delta) = \frac{2 k_{\mu}}{c_{\mu}}\bigg(  \textnormal{c}_t^{\nD}(\delta) + \sqrt{c_{\mu} \lambda} S + 2L^2 S k_{\mu} \sqrt{\frac{c_{\mu}}{\lambda}}  \frac{\gamma^{D(\gamma)}}{1-\gamma}  \bigg)\;.
\end{equation*}
Then simultaneously for all $t \in \mathcal{T}(\gamma)$ 
$$
|\mu(\tilde{A}_t^{\top} \theta^{\star}_t) - \mu(\tilde{A}_t^{\top} 
\tilde{\theta}_t^{\nD}) | \leq \rho_t^{\nD}(\delta) \lVert \tilde{A}_t \rVert_{W_{t-1}^{-1}}\;,
$$
holds with a probability higher than $1-\delta$.
\end{customprop}

\begin{proof}

During the proof, when no confusion is possible, we will forget the upper-script for the terms $\widetilde{\theta}^{\nD}_t$ and $\hat{\theta}_t^{\nD}
$. In the weighted setting, $g_{t-1}: \mathbb{R}^d \mapsto \mathbb{R}^d$ is defined
 by $g_{t-1}(\theta) = \sum_{s=1}^{t-1} \gamma^{t-1-s} \mu(A_s^{\top} \theta) A_s +
  \lambda \theta $. The associated Jacobian matrix denoted by $J_{t-1}$ verifies
   $J_{t-1}(\theta) = \sum_{s=1}^{t-1} \gamma^{t-1-s} \dot{\mu}(A_s^{\top} \theta) A_s
    A_s^{\top}  + \lambda I_d$. $\hat{\theta}^{\nD}_t \, $  verifies $g_{t-1}(\hat{\theta}
    ^{\nD}_t) = \sum_{s=1}^{t-1} \gamma^{t-1-s} A_s X_s$.

We also need to introduce two more matrices,
\begin{equation}
\label{eq:V_t}
V_t = \gamma^{-t} W_t = \sum_{s=1}^t \gamma^{-s} A_s A_s^{\top} + \frac{\lambda \gamma^{-t}}{c_{\mu}} I_d
\end{equation}
and
\begin{equation}
\label{eq:V_tilde_t}
\widetilde{V}_t = \gamma^{-2t} \widetilde{W}_t = \sum_{s=1}^t \gamma^{-2s} A_s A_s^{\top} + \frac{\lambda \gamma^{-2t}}{c_{\mu}} I_d\;.
\end{equation}

In the previous equations $W_t$ and $\widetilde{W}_t$ are defined in Equation \eqref{eq_Design_matrix_D} and \eqref{eq_Design_matrix_D2} respectively. Thanks to the fundamental Theorem of Calculus with $G_{t-1}(\theta^{\star}_t, \tilde{\theta}_t) = \int_{0}^1 J_{t-1}(u \theta^{\star}_t + (1-u) \tilde{\theta}_t) du$, the following holds 
 \begin{equation}
 \label{eq_g_t_G_t_D}
 g_{t-1}(\theta^{\star}_t) - g_{t-1}(\tilde{\theta}_t) = G_{t-1}(\theta^{\star}_t, \tilde{\theta}_t) (\theta^{\star}_t- \tilde{\theta}_t).
 \end{equation}
 
Using the same argument than for Proposition \ref{prop_SW_GLM_concentration}, we have $G_t$ is an invertible matrix and 
$G_{t-1}(\theta^{\star}_t, \tilde{\theta}_t) \geq c_{\mu} W_{t-1}$. Knowing that $0<\gamma<1$, it ensures $\widetilde{W}_{t-1} \leq W_{t-1}$. Combining both inequalities gives,
\begin{equation}
\label{inequality_tilde_V_t_invert}
\widetilde{W}_{t-1} \leq W_{t-1} \leq \frac{1}{c_{\mu}} G_{t-1}(\theta^{\star}_t, \tilde{\theta}_t)\;,
\end{equation}
\begin{equation}
\label{inequality_G_t_invert}
 G_{t-1}^{-1}(\theta^{\star}_t, \tilde{\theta}_t) \leq \frac{1}{c_{\mu}} W_{t-1}^{-1}\;.
\end{equation}

We introduce the martingale $S_t = \sum_{s=1}^t \gamma^{-s} A_s \eta_s$.
Let $B_t = \sum_{s=1}^{t- D(\gamma)-1} \gamma^{-s} (\mu(A_s^{\top} \theta^{\star}_t) - \mu(A_s^{\top} \theta^{\star}_s) )A_s$ and let us abbreviate $G_t(\theta^{\star}_t, \tilde{\theta}_t)$ as $G_t$, then
  \begin{align*}
| \mu(\tilde{A}_t^{\top}
 \theta^{\star}_t) - \mu(\tilde{A}_t^{\top} 
\tilde{\theta}_t) |  
& \leq k_{\mu}|\tilde{A}_t^{\top}(\theta^{\star}_t-
 \tilde{\theta}_t) | 
 = k_{\mu} |\tilde{A}_t^{\top} G_{t-1}^{-1}
 (g_{t-1}(\theta^{\star}_t)-g_{t-1}(\tilde{\theta}_t))| \quad \textnormal{(Equation \eqref{eq_g_t_G_t_D})} \\
 &=k_{\mu} |\tilde{A}_t^{\top} G_{t-1}^{-1} \widetilde{W}_{t-1}^{1/2} \widetilde{W}_{t-1}^{-1/2}
 (g_{t-1}(\theta^{\star}_t)-g_{t-1}(\tilde{\theta}_t))| \\
 &\leq k_{\mu} \lVert \tilde{A}_t \rVert_{G_{t-1}^{-1} \widetilde{W}_{t-1} G_{t-1}^{-1}} \lVert g_{t-1}(\theta^{\star}_t)-g_{t-1}(\tilde{\theta}_t)\rVert_{ \widetilde{W}_{t-1} ^{-1}} \quad \textnormal{(C-S)}\\ 
& \leq \frac{k_{\mu}}{\sqrt{c_{\mu}}}  \lVert \tilde{A}_t \rVert_{G_{t-1}^{-1}} \lVert g_{t-1}(\theta^{\star}_t)-g_{t-1
}(\tilde{\theta}_t)\rVert_{ \widetilde{W}_{t-1} ^{-1}} \quad \textnormal{(Inequality \ref{inequality_tilde_V_t_invert})} \\
&\leq \frac{k_{\mu}}{c_{\mu}} \lVert \tilde{A}_t \rVert_{W_{t-1}^{-1}} \lVert g_{t-1}(\theta^{\star}_t)-g_{t-1}(\tilde{\theta}_t)\rVert_{ \widetilde{W}_{t-1} ^{-1}} \quad \textnormal{(Inequality \ref{inequality_G_t_invert})}\\
&  \leq 2\frac{k_{\mu}}{c_{\mu}} \lVert \tilde{A}_t \rVert_{W_{t-1}^{-1}} \lVert g_{t-1}(\theta^{\star}_t)-g_{t-1}(\hat{\theta}_t)\rVert_{\widetilde{W}_{t-1}^{-1}} \quad \textnormal{(Definition of } \tilde{\theta}_t \textnormal{)}\\
& \leq \frac{2k_{\mu}}{c_{\mu}} \lVert \tilde{A}_t \rVert_{W_{t-1}^{-1}} 
\left(  \lVert \sum_{s= 1}^{t-1} \gamma^{t-1-s} \mu(A_s^{\top} \theta^{\star}_t) A_s
   - \sum_{s= 1}^{t-1} \gamma^{t-1-s} A_s X_s  
  \rVert_{\widetilde{W}_{t-1}^{-1}}  +  \lVert \lambda \theta^{\star}_t \rVert_{\widetilde{W}_{t-1}^{-1}}\right)  \\
&  \leq \frac{2k_{\mu}}{c_{\mu}} \lVert \tilde{A}_t \rVert_{W_{t-1}^{-1}} 
\left(  \lVert \sum_{s= 1}^{t-1} \gamma^{-s} (\mu(A_s^{\top} \theta^{\star}_t) - \mu(A_s^{\top} \theta^{\star}_s) )A_s
  -\sum_{s= 1}^{t-1} \gamma^{-s}A_s \eta_s  \rVert_{\widetilde{V}_{t-1}^{-1}}  + \sqrt{\lambda c_{\mu}} S \right) \\
 &\leq \frac{2k_{\mu}}{c_{\mu}} \lVert \tilde{A}_t \rVert_{W_{t-1}^{-1}} 
\left(  \lVert B_t
  -\sum_{s= 1}^{t-1} \gamma^{-s}A_s \eta_s
  \rVert_{\widetilde{V}_{t-1}^{-1}}  + \sqrt{\lambda c_{\mu}} S \right) \quad \textnormal{(Thanks to $t \in \mathcal{T}(\gamma)$)}\\
  &\leq \frac{2k_{\mu}}{c_{\mu}}  \lVert \tilde{A}_t \rVert_{W_{t-1}^{-1}} 
 \left( \lVert B_t \rVert_{\widetilde{V}_{t-1}^{-1}}
  + \lVert S_{t-1} \rVert_{\widetilde{V}_{t-1}^{-1}}  
  + \sqrt{c_{\mu} \lambda} S \right) \quad \textnormal{(Triangle Inequality)}.
  \end{align*}

By using the results of Corollary \ref{corollary:S_t} and the fact that $(\eta_s)_{s \geq 1}$ are conditionally $\X/2$-subgaussian,
 with probability $\geq 1- \delta$ it holds that 
$$ 
\forall t \geq 1,  \lVert S_t \rVert_{\widetilde{V}_t^{-1}} \leq \frac{\X}{2} \sqrt{ 2\log\left(\frac{1}{\delta}\right) +d \log\left(1 +
\frac{ c_{\mu} L^2 (1-\gamma^{2t})}{\lambda d (1-\gamma^2)}\right)}\;.
$$
The next step consists in upper-bounding the bias term $B_t$.
\begin{align*}
\lVert B_t \rVert_{\widetilde{V}_{t-1}^{-1}} &= \lVert \sum_{s=1}^{t- D(\gamma)-1} \gamma^{-s} (\mu(A_s^{\top} \theta^{\star}_t) - 
\mu(A_s^{\top} \theta^{\star}_s) )A_s  \rVert_{\widetilde{V}_{t-1}^{-1}}  \\
&\leq \sqrt{\frac{c_{\mu}}{\lambda \gamma^{-2(t-1)}}}  \left\lVert \sum_{s=1}^{t- D(\gamma)-1} \gamma^{-s} (\mu(A_s^{\top} \theta^{\star}_t) - 
\mu(A_s^{\top} \theta^{\star}_s) )A_s  \right\rVert_2 
\quad (\widetilde{V}_{t-1} \geq \frac{\lambda \gamma^{-2(t-1)}}{c_{\mu}})\\
&\leq  \sqrt{\frac{c_{\mu}}{\lambda \gamma^{-2(t-1)}}}  \sum_{s=1}^{t- D(\gamma)-1} \gamma^{-s} |(\mu(A_s^{\top} \theta^{\star}_t) - 
\mu(A_s^{\top} \theta^{\star}_s))| \lVert A_s \rVert_2 \quad \textnormal{(Triangle Inequality)}\\
&\leq  L \sqrt{\frac{c_{\mu}}{\lambda} }  \sum_{s=1}^{t- D(\gamma)-1} \gamma^{t-1-s} |(\mu(A_s^{\top} \theta^{\star}_t) - 
\mu(A_s^{\top} \theta^{\star}_s))| \\
&\leq  L \sqrt{\frac{c_{\mu}}{\lambda} }  \sum_{s=1}^{t- D(\gamma)-1} \gamma^{t-1-s} k_{\mu} |A_s^{\top} (\theta^{\star}_t - \theta^{\star}_s)| \quad
\textnormal{(Assumption \ref{assumption_c_mu})}\\
&\leq  2 L^2 S  k_{\mu}  \sqrt{\frac{c_{\mu}}{\lambda} }  \sum_{s=1}^{t- D(\gamma)-1} \gamma^{t-1-s}
 \quad \textnormal{(C-S + Assumption \ref{assumption_actions} + Assumption \ref{assumption_param})} \\
  & \leq 2 L^2 S k_{\mu}  \sqrt{\frac{c_{\mu}}{\lambda}} \frac{\gamma^{D(\gamma)}}{1- \gamma}\;.
\end{align*}
The result is obtained by combining the inequalities.
\end{proof}

\subsection{Proof of Theorem \ref{theorem_regret_D}}
\label{subsection:regret_D}

\begin{customthm}{2}[Regret of $\D$]
The regret of the $\D$ policy is upper-bounded with probability  $\geq 1- 2\delta$ by
\begin{equation*}
R_T \leq  2\rho^{\nD}_T(\delta)\sqrt{2dT} \sqrt{T \log \left(\frac{1}{\gamma}\right)\hspace{-0.05cm}+\hspace{-0.05cm}\log\left(1 \hspace{-0.05cm}
+
\frac{c_{\mu} L^2}{d\lambda(1-\gamma)} \right)}  + \X \Gamma_T D(\gamma)\;,
\end{equation*}
where
$\rho^{\nD}$ is defined in Equation \eqref{eq_rho_D} and $\Gamma_T$ is the number of changes up to time $T$.
\end{customthm}
\begin{proof}
The regret is defined in the following way.
\begin{align*}
R_T
&= \sum_{t \notin \mathcal{T}(\gamma)}  (\mu(A_{t,\star}^{\top} \theta^{\star}_t) - \mu(A_{t}^{\top} \theta^{\star}_t)) 
+\sum_{t \in \mathcal{T}(\gamma)}  (\mu(A_{t,\star}^{\top} \theta^{\star}_t) - \mu(A_{t}^{\top} \theta^{\star}_t)) \\
&  \leq \X \Gamma_T  D(\gamma) + \sum_{t \in \mathcal{T}(\gamma)}  \min\{\X ,\mu(A_{t,\star}^{\top} \theta^{\star}_t) - \mu(A_{t}^{\top} \theta^{\star}_t)\}\;.
\end{align*}
By using the result of Corollary
 \ref{prop_D_GLM_anytime_upper}, it holds that with probability
  $\geq 1- 2\delta$
\begin{align*}
\begin{split}
R_T &\leq  \X \Gamma_T D(\gamma) +   \sum_{t \in \mathcal{T}(\gamma)} \min\{\X,2 \rho_t^{\nD}(\delta) \lVert A_t \rVert_{W_{t-1}^{-1}}\} \\
&\leq  \X \Gamma_T D(\gamma) +    2 \rho_T^{\nD}(\delta) \sum_{t \in \mathcal{T}(\gamma)} \min\{1,   \lVert A_t \rVert_{W_{t-1}^{-1}} \} \\
&\leq  \X \Gamma_T D(\gamma) +    2\rho_T^{\nD}(\delta) \sqrt{T} \sqrt{\sum_{t=1}^T  \min\{1, \lVert A_t \rVert_{W_{t-1}^{-1}}^2\}} \quad \textnormal{(C-S)}\;.
\end{split}
\end{align*}

Based on the proof of Proposition 4 in Appendix B of \cite{russac2019weighted}, we have
$$
\sqrt{\sum_{t=1}^T  \min\{1, \lVert A_t \rVert_{W_{t-1}^{-1}}^2\}} \leq \sqrt{2d} \sqrt{T \log\left(  \frac{1}{\gamma} \right) +
 \log\left(1 + \frac{c_{\mu}L^2}{d\lambda(1-\gamma)} \right) }\;.
$$
Therefore, with probability greater than $1-2\delta$,
$$
R_T \leq 2\rho_T^{\nD}(\delta) \sqrt{2d} \sqrt{T} \sqrt{T \log \left(\frac{1}{\gamma}\right) + \log\left(1 + \frac{c_{\mu}L^2}{d\lambda(1-\gamma)} \right)  } + \X \Gamma_T D(\gamma)\;.
$$ 
\end{proof}
\subsection{Proof of Corollary \ref{corollary:asympt_regret_D}}
\label{subsec:asympt_regret_D}
 \begin{customcor}{4}
 By taking $D(\gamma) = \frac{\log(1/(1-\gamma))}{1- \gamma}$,
 \begin{enumerate}

\item If $\Gamma_T$ is known, by choosing $\gamma = 1- (\frac{\Gamma_T}{dT})^{2/3}$, the regret of the $\D$ algorithm is asymptotically upper bounded
with high probability by a term $\tilde{O}(d^{2/3} \Gamma_T^{1/3} T^{2/3})$.

\item If $\Gamma_T$ is unknown, by choosing $\gamma= 1- \frac{1}{d^{2/3}T^{2/3}}$, the regret of the $\D$ algorithm is asymptotically upper bounded
with high probability by a term $\tilde{O}(d^{2/3} \Gamma_T T^{2/3})$.
 \end{enumerate}
\end{customcor}

\begin{proof}
Let $\gamma$ be defined as $\gamma = 1- (\frac{\Gamma_T}{dT})^{2/3}$ and $D(\gamma)= \frac{\log(1/(1-\gamma))}{(1-\gamma)}$.
With this choice of $\gamma$, $D(\gamma)$ is equivalent to $d^{2/3} \Gamma_T^{-2/3}T^{2/3} \log(T)$. Thus, $D(\gamma) \Gamma_T$
 is equivalent to $ d^{2/3} \Gamma_T^{1/3} T^{2/3} \log(T/\Gamma_T)$. 

In addition,
\begin{align*}
\gamma^{D(\gamma)} &= \exp(D(\gamma) \log(\gamma)) = \exp \left( - \frac{\log(\gamma)}{1- \gamma} \log(1-\gamma) \right)  \sim 1-\gamma\;.
\end{align*}

Hence, when omitting the logarithmic terms, $\rho_T^{\nD}(\delta)$ behaves as $\sqrt{d}$.

Furthermore, $\log(1/\gamma) \sim d^{-2/3} \Gamma_T^{2/3} T^{-2/3}$, implying that $T \log(1/\gamma) \sim d^{-2/3} \Gamma_T^{2/3} T^{1/3}$.

As a result, it holds that when neglecting the log terms,
$$\rho_T^{\nD}(\delta) \sqrt{dT} \sqrt{T \log(1 / \gamma) + \log\left( 1 + \frac{c_{\mu}L^2}{d \lambda (1- \gamma)}\right)} \approx d T^{1/2}  \sqrt{d^{-2/3} \Gamma_T^{2/3} T^{1/3}} 
= d^{2/3} \Gamma_T^{1/3} T^{2/3}\;.
$$

We obtain the desired result.
\end{proof}

\section{Subgaussianity of the noise term}

\subsection{Conditional Hoeffding lemma}
\label{subsec:hoeffding_cond}
\begin{lemma}[Conditional Hoeffding lemma]
\label{lemma:hoeffding_conditional}
Let $(\Omega, \mathcal{F}, (\mathcal{F}_t)_{t \geq 0}, \mathbb{P})$ be a probability space where $(\mathcal{F}_t)_{t \geq 0}$ is a filtration and $(X_t)_{t \geq 0}$ is a sequence of adapted random variables. Under the assumptions: 
\begin{enumerate}
\item $G_t$ is $(\mathcal{F}_{t-1})$-measurable
\item $G_t+ a_t \leq X_t \leq G_t + b_t$  \quad a.s
\item $\mathbb{E}\left[X_t | \mathcal{F}_{t-1} \right] = 0$
\end{enumerate}

Then,
$$
\forall \lambda \in \mathbb{R}, \mathbb{E} \left[  e^{\lambda X_t} | \mathcal{F}_{t-1} \right] \leq e^{\frac{\lambda^2 (b_t- a_t)^2}{8}}, \quad a.s. 
$$
\end{lemma}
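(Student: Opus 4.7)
The plan is to adapt the classical Hoeffding lemma to the conditional setting, essentially by treating the $\sigma$-algebra $\mathcal{F}_{t-1}$ as a fixed state and running the standard argument pointwise on each $\omega$. The three assumptions are precisely what is needed: $\mathcal{F}_{t-1}$-measurability of $G_t$ lets us pull it out of conditional expectations; the two-sided bound on $X_t$ provides the convex-combination inequality; and the conditional mean-zero condition will ultimately kill the first-order Taylor term.

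First, I would reduce to a shifted variable with the boundedness made explicit. Set $Y_t = X_t - G_t$, so $a_t \leq Y_t \leq b_t$ almost surely while $\mathbb{E}[Y_t \mid \mathcal{F}_{t-1}] = -G_t$. Since $G_t$ is $\mathcal{F}_{t-1}$-measurable,
\[
\mathbb{E}\!\left[e^{\lambda X_t} \,\big|\, \mathcal{F}_{t-1}\right] \;=\; e^{\lambda G_t}\, \mathbb{E}\!\left[e^{\lambda Y_t} \,\big|\, \mathcal{F}_{t-1}\right],
\]
and the remaining work is to control the second factor. Convexity of $y \mapsto e^{\lambda y}$ on $[a_t, b_t]$ gives the pointwise inequality
\[
e^{\lambda Y_t} \;\leq\; \frac{b_t - Y_t}{b_t - a_t}\, e^{\lambda a_t} + \frac{Y_t - a_t}{b_t - a_t}\, e^{\lambda b_t}.
\]
Taking conditional expectations, using $\mathbb{E}[Y_t \mid \mathcal{F}_{t-1}] = -G_t$, and setting $p = (-a_t - G_t)/(b_t - a_t) \in [0,1]$, the upper bound reduces to $(1-p)\,e^{\lambda a_t} + p\, e^{\lambda b_t}$.

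The remaining task is the scalar inequality $\psi(\lambda) \leq \lambda^2 (b_t - a_t)^2 / 8$, where
\[
\psi(\lambda) \;=\; \lambda G_t + \log\!\left((1-p)\,e^{\lambda a_t} + p\, e^{\lambda b_t}\right).
\]
I would use a second-order Taylor expansion around $\lambda = 0$. One checks $\psi(0) = 0$ and $\psi'(0) = G_t + (1-p)a_t + p b_t = 0$ by the very definition of $p$. For the second derivative, introducing the tilted probability $q(\lambda) = (1-p)e^{\lambda a_t} / \bigl((1-p)e^{\lambda a_t} + p e^{\lambda b_t}\bigr)$ leads after differentiation to $\psi''(\lambda) = q(\lambda)(1-q(\lambda))(b_t - a_t)^2 \leq (b_t - a_t)^2 / 4$. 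The Lagrange remainder then yields the claim, and combined with the factor $e^{\lambda G_t}$ absorbed into $\psi$, we get the desired subgaussian-type bound. The only subtle point is that $a_t$ and $b_t$ should be $\mathcal{F}_{t-1}$-measurable for the conditional expectations to commute with the weights $(b_t - Y_t)/(b_t - a_t)$ and $(Y_t - a_t)/(b_t - a_t)$; since the convex bound holds pointwise in $\omega$ and everything is done conditionally on $\mathcal{F}_{t-1}$, $a_t, b_t, G_t$, and hence $p$, can be treated as constants throughout the computation. This is the only delicate bookkeeping step; the rest is the classical Hoeffding argument verbatim.
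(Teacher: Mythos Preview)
The paper states this lemma without proof; it is a classical result and the authors simply record it for later use. Your argument is the standard Hoeffding proof carried out conditionally on $\mathcal{F}_{t-1}$, and it is correct: the convexity bound, the identification of $p$ so that $\psi'(0)=0$, and the uniform bound $\psi''(\lambda)\le (b_t-a_t)^2/4$ via the tilted Bernoulli variance are exactly the right steps. One minor remark: in the lemma as stated, $a_t$ and $b_t$ are deterministic constants, so the measurability concern you flag at the end is automatically satisfied and no extra bookkeeping is needed.
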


This means that under the assumption of Lemma \ref{lemma:hoeffding_conditional}, $X_n$ is $(b_n-a_n)/2$-subgaussian conditionally on the past.

\subsection{Consequence on the noise term in GLMs}

In our bandit setting, the filtration associated with the random observations is denoted $\mathcal{F}_t = \sigma(X_1,...,X_t)$ and is such that $A_t$ is $\mathcal{F}_{t-1}$-measurable and $\eta_t$ is $\mathcal{F}_t$-measurable. Under assumption \ref{assumption_noise}, $\eta_t = X_t - \mu(A_t^{\top} \theta^{\star}_t)$ satisfies:

\begin{enumerate}
\item $- \mu(A_t^{\top} \theta^{\star}_t) \leq \eta_t \leq \X  -  \mu(A_t^{\top} \theta^{\star}_t)$ \quad a.s
\item $\mu(A_t^{\top}\theta^{\star}_t)$ is  $\mathcal{F}_{t-1}$-measurable
\item $ \mathbb{E} \left[   \eta_t | \mathcal{F}_{t-1}  \right] = 0 $
\end{enumerate}

Lemma 1 implies that  $\eta_t$ is  $m/2$-subgaussian conditionally on the past.

\end{document}